\documentclass[authoryear,3p,times]{elsarticle}

\usepackage{natbib}
\usepackage{amssymb}
\usepackage{amsmath}
\usepackage{bm}

\usepackage[utf8]{inputenc} 
\usepackage[T1]{fontenc}    
\usepackage{hyperref}       
\usepackage{url}            
\usepackage{booktabs}       
\usepackage{amsfonts}       
\usepackage{nicefrac}       
\usepackage{microtype}      

\usepackage{subcaption}
\usepackage{lipsum}
\usepackage{graphicx}
\graphicspath{ {./images/} }
\usepackage{xcolor}
\usepackage{float}

\usepackage[colorinlistoftodos,prependcaption,textsize=small]{todonotes}

\usepackage{amsthm}
\newtheorem{proposition}{Proposition}

\newtheorem{definition}{Definition}

\journal{Transportation Research Part B: Methodological}

\begin{document}

\begin{frontmatter}

\title{A Semiparametric Framework for Stochastic Fundamental Diagram Modeling}

\author[add1]{Pengnan Chi}
\ead{pengnan@kth.se} 

\author[add1]{Xiaoliang Ma}
\ead{liang@kth.se}

\author[add1]{Magnus Jansson}
\ead{janssonm@kth.se}

\author[add3]{Magnus Nordenvaad}
\ead{magnus.nordenvaad@trafikverket.se}

\affiliation[add1]{
    organization={KTH Royal Institute of Technology},
    city={Stockholm},
    country={Sweden}
}

\affiliation[add3]{
    organization={Swedish Transport Administration},
    city={Borlänge},
    country={Sweden}
}

\begin{abstract} 
The stochastic fundamental diagram (SFD) provides a probabilistic description of the relationship between traffic density and flow or speed, enabling uncertainty-aware traffic modeling. 
However, existing stochastic models frequently struggle to accommodate rigorous physical constraints while retaining sufficient flexibility to capture complex nonlinear patterns.
To address this, we propose a novel semiparametric SFD modeling framework by leveraging specially designed functional forms. These functions intrinsically satisfy physical constraints defined on the moments of the conditional flow distribution given traffic density while incorporating neural-network-based structures to capture complex empirical patterns.
We derive a system of moment-matching equations to convert physical constraints into the parameterization of the conditional distribution, proving that a unique solution exists for the location–scale family of distributions, thereby guaranteeing model well-posedness. 
Furthermore, we demonstrate that the framework can be extended to non-location–scale distributions, including those requiring additional boundary constraints. Empirical evaluations on the real-world dataset reveal that our approach consistently outperforms representative baselines, delivering superior probabilistic accuracy and robust uncertainty quantification, particularly in congested regimes. Overall, the proposed framework provides a theoretically grounded and flexible foundation for stochastic traffic flow modeling.
\end{abstract}

\begin{keyword}
Stochastic fundamental diagram;
Semiparametric modeling;
Traffic flow modeling;
Heteroscedastic uncertainty;
Neural network.
\end{keyword}

\end{frontmatter}

\section{Introduction}
The fundamental diagram (FD) describes the intrinsic relationship among three macroscopic traffic variables, i.e., traffic density $\rho$, flow rate $q$, and space-mean speed $v$, under steady-state conditions.  Since traffic flow is the product of speed and density, the fundamental diagram is typically represented by either a speed-density or flow-density relationship. 
Studies on the fundamental diagram provide critical insights on the vehicle movement patterns on a road segment, enabling the development of models for traffic simulation, congestion detection, and traffic management strategies. 

Classical FD models typically assume a deterministic relationship between traffic density and speed, where each density corresponds to a uniquely speed value. The seminal work of the Greenshields model \citep{greenshields1935study} establishes a linear speed-density relationship, followed by several refinements such as the Greenberg model \citep{greenberg1959analysis} and the Newell model \citep{newell1961nonlinear}. Although these classical models have provided valuable insights into traffic flow theory, they inherently overlook the stochastic fluctuations and heterogeneous driver behaviors that characterize traffic in real world. This limitation has motivated the development of stochastic approaches to FD modeling, which aims to capture inherent uncertainty present in real-world traffic observations. 

To better capture the random characteristics inherent in road traffic flow, the stochastic fundamental diagram (SFD) has been proposed. Instead of mapping density to a single deterministic flow value, the SFD characterizes a probability distribution of flow or speed for a given density. Existing SFD models, however, are often closely tied to deterministic FD formulations. 
One common approach treats the parameters of a deterministic model, such as free-flow speed and jam density, as random variables that follow a prescribed distribution \citep{jabari2014probabilistic, zhou2020modeling, bai2021calibration, cheng2024analytical}. 
Another prevalent approach is to integrate deterministic FD structures with stochastic processes to represent random traffic dynamics \citep{ni2018modeling, lei2024unraveling, cheng2022bayesian}.

While conceptually well-defined, these approaches inherit limitations from the deterministic models on which they are built. 
In particular, widely used single-regime deterministic models often have limited capacity to represent complex traffic states and transitions across the full range of observed densities \citep{ni2015traffic}. Consequently, relying on predetermined deterministic forms in certain traffic regimes may introduce systematic bias that undermines modeling accuracy.
In addition, there is no consensus on the appropriate deterministic formulation to adopt in SFD modeling \citep{lei2024unraveling, cheng2022bayesian, wang2021model}. This lack of agreement stems from the different underlying assumptions adopted by existing models, making it difficult to establish clear and unified guidelines for model selection.

These limitations motivate the development of a modeling framework with enhanced nonlinear modeling capabilities as well as strict adherence to traffic flow physics. In this study, we adopt a semiparametric modeling strategy that integrates physically motivated constraints with data-driven components. Rather than relying on a predefined deterministic fundamental diagram, the proposed approach enforces rigorously-defined physical properties while allowing functional relationships to be learned from data. 
Specifically, a semiparametric formulation is constructed in which physically constrained structures ensure consistency with traffic flow theory, while neural network (NN) based correction terms are introduced to capture complex patterns observed in empirical traffic data. In addition, traffic flow conditioned on density is modeled through a probabilistic distribution capable of representing asymmetric variability commonly observed in real traffic conditions. The distribution parameters are modeled as semiparametric functions of density, enabling the model to represent the full conditional distribution of traffic flow across different density regimes.


The main contributions of this paper are summarized as follows:
\begin{itemize}
    \item A semiparametric modeling framework is proposed for estimating the stochastic flow-density relationship in the traffic fundamental diagram, which enforces physical constraints while flexibly capturing complex patterns in empirical data.
    \item A physically constrained parameterization strategy is developed for selected probabilistic distributions; the strategy is proven to be valid for the location-scale family and demonstrates strong extensibility to other complex distributions.
    \item The proposed approach is validated using extensive experiments on real-world traffic data, with appropriate evaluation workflow to assess modeling accuracy and uncertainty quantification.
\end{itemize}


\section{Related Works}
\subsection{Deterministic Fundamental Diagrams}
Traditionally, deterministic fundamental diagrams characterize traffic flow primarily through the speed--density relationship. The Greenshields model \citep{greenshields1935study}, which assumes a linear relationship between speed and density, was the first to formalize this concept. Since then, extensive research has sought to improve the accuracy and interpretability of deterministic fundamental diagram models. Based on their functional forms, these models are commonly classified into three major categories: logarithmic, exponential, and polynomial formulations.
The detailed mathematical expressions for representative models in each category are summarized in Table \ref{tab:det_list}.
\begin{table}[t]
    \centering
    \begin{tabular}{ccc}
    \toprule
    Category  &  Model & Formulation \\
    \midrule
    Polynomial function-based &  Greenshields \cite{greenshields1935study}  & $v = v_{\mathrm{free}}(1-\frac{\rho}{\rho_{jam}})$ \\[2ex]
    & Jayakrishnan \cite{jayakrishnan1995dynamic} & $v=v_{\mathrm{min}}+(v_{\mathrm{free}}-v_{\mathrm{min}})\left(1-\frac{\rho}{\rho_{\mathrm{jam}}}\right)$ \\[2ex]
    & S3 \cite{cheng2021s} & $v = \frac{v_{\mathrm{free}}}{\left[1+\left(\frac{\rho}{\rho_{\mathrm{critical}}}\right)^m\right]^{\frac{2}{m}}} $\\
    \midrule
    Exponential function-based & Newell\cite{newell1961nonlinear} & $v = v_{\mathrm{free}}\left\{1-\exp\left[-\frac{\lambda}{v_{\mathrm{free}}} \left( \frac{1}{\rho} - \frac{1}{\rho_{\mathrm{jam}}}  \right)  \right] \right\} $\\[2ex]
    & Papageorgiou \cite{papageorgiou1989macroscopic} & $v=v_{\mathrm{free}}\exp\left[-\frac{1}{\alpha} \left( \frac{\rho}{\rho_{\mathrm{jam}}}\right)^\alpha \right]$\\[2ex]
    & Del Castillo \cite{del1995functional} & $v=v_{\mathrm{free}} \left\{1-\exp\left[\frac{v_{\mathrm{jam}}}{v_{\mathrm{free}}}\left(1-\frac{\rho_{\mathrm{jam}}}{\rho} \right) \right] \right\} $\\[2ex]
     & Wang \cite{wang2011logistic} & $v=v_{\mathrm{critical}} + \frac{v_{\mathrm{free}} - v_{\mathrm{critical}}}{\left[ 1 + \exp\left( \frac{\rho-\rho_{\mathrm{critical}}}{\theta_1} \right) \right]^
     {\theta_2}} $\\
     \midrule
     Logarithmic function-based & Greenberg \cite{greenberg1959analysis}& $v=v_{\mathrm{critical}}\log\left( \frac{\rho_{\mathrm{jam}}}{\rho}\right)$\\
    \bottomrule
    \end{tabular}
    \caption{Three Categories of Deterministic Models}
    \label{tab:det_list}
\end{table}

\subsection{Stochastic Fundamental Diagrams}



The most common approach to modeling SFD is to extend existing deterministic models by introducing random parameters. In this framework, selected parameters of deterministic models are treated as random variables to capture variability in steady-state traffic conditions. 
For example, \citet{jabari2014probabilistic} and \citet{zhou2020modeling} modeled microscopic parameters such as headway and speed adaptation time as random variables and derived the macroscopic fundamental diagram relations under steady-state assumptions.
Similarly, \citet{qu2017stochastic} and \citet{wang2021model} estimated model parameters at different distribution quantiles, resulting in discrete stochastic representations of the fundamental diagram.
model with random variables and obtained the stochastic speed-density relationship.
Building on this approach, \citet{ahmed2021fundamental} further analyzed traffic flow heterogeneity arising from different vehicle types. In addition to randomizing existing model parameters, \citet{bai2021calibration} introduced additional random variables to explicitly represent traffic heterogeneity. \cite{cheng2024analytical} investigated analytical properties of the model parameter when making other parameters as random variables.

In addition to random parameter formulations, another commonly adopted approach is to treat a deterministic fundamental diagram as the expected value and introduce stochasticity through an explicit random process. In this framework, the deterministic model describes the mean traffic behavior while stochastic variability is modeled separately. 
For example, \citet{ni2018modeling} applied a complex deterministic model \citep{ni2016vehicle} for the expected values and characterized the stochasticity using the Maxwell–Boltzmann distribution inspired by ideal gas theory. 
Similarly, \citet{liu2023gaussian} extend the deterministic fundamental diagrams using Gaussian process regression to incorporate additional traffic variables.
Along the same line, \citet{lei2024unraveling}and \citet{cheng2022bayesian} adopted the deterministic models as the expectation function and introduced stochasticity with Gaussian process regression.

Some studies depart entirely from deterministic fundamental diagram formulations. Instead, they derive stochastic traffic relationships directly from traffic flow dynamics or alternative stochastic representations.
For example, several studies model traffic evolution based on the continuity equation to describe macroscopic traffic dynamics without relying on predefined fundamental diagrams \citep{shi2021physics, storm2022efficient, ngoduy2011multiclass}. In a different line of study, \citet{zhang2025stochastic} and \citet{zhang2025stochastic} described the platoon behavior using Markov chains and derived the macroscopic stochastic FD. Moreover, \citet{bramich2023fitfun} employed an advanced statistical regression framework, namely generalized additive models for location, scale, and shape (GAMLSS) \citep{rigby2005generalized}, to develop the SFD model. 

\section{Methodology}
\subsection{Problem Definition}
\label{sec:problem_definition}
This study treats traffic flow as a random variable conditioned on density. Let $\rho \in [0, \rho_{\mathrm{jam}}]$ and $q \in \mathbb{R}_{\ge 0}$ denote the traffic density and traffic flow, respectively, where $\rho_{\mathrm{jam}}$ is the traffic jam density. The objective is to model the
conditional probability distribution of traffic flow given density, denoted by $p(q \mid \rho)$.
We assume that, for each fixed density $\rho$, the conditional distribution $p(q \mid \rho)$ admits an analytical probability density function (PDF) belonging to a parametric family $f_{\mathrm{PDF}}$, parameterized by an $n$-dimensional vector $\bm{\eta} \in \mathbb{R}^n$:
\begin{equation}
p(q \mid \rho) = f_{\mathrm{PDF}}\!\left(q \mid \rho; \bm{\eta}\right).
\label{eq:problem-def-dist}
\end{equation}

The admissible distributions are required to satisfy fundamental physical properties of traffic flow. Under the zero-density condition ($\rho = 0$), the absence of vehicles implies that both traffic flow and its variability vanish. Under the jam-density condition ($\rho = \rho_{\mathrm{jam}}$), vehicles are
fully congested, resulting in zero flow with negligible variability. For intermediate density, the traffic flow value must be positive, and its variability must also be positive.

These physical requirements impose the following moment-based constraints on
the conditional distribution $p(q \mid \rho)$. Let $\mathbb{E}[\cdot]$ and
$\mathbb{S}[\cdot]$ denote the expectation and standard deviation, respectively:
\begin{equation}
\label{eq:boundary_constraints}
\begin{cases}
\mathbb{E}_{q \sim p(q \mid \rho)}[q] =
\mathbb{S}_{q \sim p(q \mid \rho)}[q] = 0,
& \rho \in \{0, \rho_{\mathrm{jam}}\}, \\
0<\mathbb{E}_{q \sim p(q \mid \rho)}[q] <\infty,
& \rho \in (0, \rho_{\mathrm{jam}}), \\
0<\mathbb{S}_{q \sim p(q \mid \rho)}[q] < \infty,
& \rho \in (0, \rho_{\mathrm{jam}}).
\end{cases}
\end{equation}
The modeling problem addressed in this study is therefore to construct a
flexible yet physically consistent parameterization of $p(q \mid \rho)$ that
satisfies these constraints while accurately capturing the stochastic
variability observed in empirical traffic data.

\subsection{Semiparametric Framework for SFD Models}

A common data-driven approach to modeling stochastic traffic flow relationships is to learn distributional parameters as functions of traffic density using nonparametric models such as neural networks \citep{bishop1994mixture, theis2015generative}. 
While such approaches offer substantial modeling flexibility and expressive capability, they do not inherently guarantee physical consistency with respect to the fundamental properties of traffic flow.
In particular, without explicitly enforcing the boundary and non-negativity conditions specified in Eq.~\eqref{eq:boundary_constraints}, unconstrained nonparametric models may produce physically implausible behavior, especially in regimes near zero density or jam density.

To address this limitation, this study adopts a semiparametric modeling framework in which physical constraints are explicitly embedded into the parameterization of the conditional distribution $p(q \mid \rho)$. The central idea begins with separating the distributional parameters into constrained and unconstrained components.

\begin{definition}[Partition of Distribution Parameters]
Let $\bm{\eta} \in \mathbb{R}^n$ ($n \ge 2$) denote the parameter vector governing the conditional distribution $p(q \mid \rho)$. The parameter vector $\bm{\eta}$ is defined by a mutually exclusive partition:
\begin{equation}
\bm{\eta}^\mathsf{T} = 
\begin{bmatrix} 
\bm{\eta}_{\mathrm{con}}^\mathsf{T} & \bm{\eta}_{\mathrm{free}}^\mathsf{T} 
\end{bmatrix},
\label{eq:partition_def}
\end{equation}
where:
\begin{itemize}
    \item $\bm{\eta}_{\mathrm{con}} \in \mathbb{R}^2$ represents the \textbf{constrained} component determined by construction through physical constraints on the first two moments (the conditional expectation and standard deviation).
    \item $\bm{\eta}_{\mathrm{free}} \in \mathbb{R}^{n-2}$ represents the \textbf{free} component that is unconstrained and capture higher-order distributional characteristics.
\end{itemize}
\end{definition}


Under this framework, the distributional parameters are derived by integrating data-driven learning with strict physical constraints. The unconstrained parameters $\bm{\eta}_{\mathrm{free}}$ are learned
directly from data using a neural network, leveraging its universal approximation capability \citep{hornik1989multilayer}. 
In contrast, the constrained parameters $\bm{\eta}_{\mathrm{con}}$ are obtained analytically by enforcing consistency between the theoretical moments of the distribution and the macroscopic traffic characteristics.

Let $\mu(\bm\eta)$ and $\sigma(\bm\eta)$ denote the theoretical expectation and standard deviation derived from the probability density function  $f_\mathrm{PDF}(q\mid\rho;\ \bm\eta)$. 
Meanwhile, we introduce two \textbf{semiparametric functions} $m(\rho)$ and $s(\rho)$, representing the conditional expectation and standard deviation of traffic flow as functions of traffic density. 
These functions, whose detailed structure will be discussed in Section \ref{sec:semiparametric-models}, satisfy the physical constraints defined in Eq.~\eqref{eq:boundary_constraints} by construction.
The consistency between the distributional moments and macroscopic traffic flow characteristics is described by a moment-matching system as follows: 
\begin{equation}\label{eq:moment_matching}
\begin{aligned}
    \mathbb{E}_{q\sim f_\mathrm{PDF}}[q] &= \mu\left(\begin{bmatrix}
    \bm\eta_\mathrm{con}^\mathsf T &\bm\eta_\mathrm{free}^\mathsf T
\end{bmatrix}^\mathsf T\right) = m(\rho; \bm\varphi, \bm h), \\
\mathbb{S}_{q\sim f_\mathrm{PDF}}[q]&=
\sigma\left(\begin{bmatrix}
    \bm\eta_\mathrm{con}^\mathsf T &\bm\eta_\mathrm{free}^\mathsf T
\end{bmatrix}^\mathsf T\right) = s(\rho; \bm\varphi, \bm h),
\end{aligned}
\end{equation}
where the semiparametric functions consist of a parametric component governed by domain-specific parameters $\bm{\varphi}$ (e.g., jam density) and a nonparametric correction term
$\bm{h}$ that captures the deviations from the underling traffic flow relationship.

The correction term $\bm{h}$ and the unconstrained distribution parameters $\bm{\eta}_{\mathrm{free}}$ are generally unknown and must be inferred from data. To provide a flexible representation of their dependence on traffic density, we introduce a neural network mapping that generates these values directly from the density input.
That is, a single neural network model $\mathrm{DNN}(\cdot)$, parameterized by $\bm{\theta}$, is employed to jointly model the latent correction term $\bm{h}$ and the unconstrained parameters $\bm{\eta}_{\mathrm{free}}$:
\begin{equation}\label{eq:dnn}
\begin{bmatrix}
\bm{h}^\mathsf{T} & \bm{\eta}_{\mathrm{free}}^\mathsf{T}
\end{bmatrix}^\mathsf{T}
=
\mathrm{DNN}(\rho; \bm{\theta}).
\end{equation}
The constrained parameters $\bm{\eta}_{\mathrm{con}}$ are subsequently
determined by solving the moment-matching system in
Eq.~\eqref{eq:moment_matching}. For the proposed framework to be well-defined, it is necessary that this system admits a unique solution.
To guarantee this property, we establish the following proposition for
location--scale families of distributions.

\begin{proposition}[Unique Solvability for Moment Matching]\label{prop-uni}
\label{prop:moment_matching}
Suppose the conditional distribution $f_{\mathrm{PDF}}(q \mid \rho; \bm{\eta})$ belongs to a location-scale family with well-defined first and second moments. Let $\bm{\eta}_{\mathrm{con}}$ represent the location and scale parameters. Then, for any intermediate density $\rho \in (0,\rho_{\mathrm{jam}})$ and any admissible unconstrained parameter vector $\bm{\eta}_{\mathrm{free}}$, the moment-matching system defined in Eq.~\eqref{eq:moment_matching} admits a unique solution for $\bm{\eta}_{\mathrm{con}}$.
\end{proposition}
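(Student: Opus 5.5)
We will use the defining property of a location--scale family: there is a standardized density $g(\cdot\,;\bm\eta_{\mathrm{free}})$ such that, writing $\bm\eta_{\mathrm{con}}=(\ell,b)$ with location $\ell\in\mathbb{R}$ and scale $b>0$,
\begin{equation*}
f_{\mathrm{PDF}}(q\mid\rho;\bm\eta)=\frac{1}{b}\,g\!\left(\frac{q-\ell}{b};\bm\eta_{\mathrm{free}}\right).
\end{equation*}
Equivalently, $q=\ell+bZ$ with $Z\sim g(\cdot\,;\bm\eta_{\mathrm{free}})$. The standardized law may depend on the shape parameters $\bm\eta_{\mathrm{free}}$, but not on $(\ell,b)$. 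By assumption $Z$ has finite first and second moments. We set $\mu_0(\bm\eta_{\mathrm{free}})=\mathbb{E}[Z]$ and $\sigma_0(\bm\eta_{\mathrm{free}})=\mathbb{S}[Z]$.

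First we reduce the theoretical moments to an explicit affine form. Linearity of expectation and the scaling of the standard deviation under affine maps with $b>0$ give
\begin{equation*}
\mu(\bm\eta)=\ell+b\,\mu_0(\bm\eta_{\mathrm{free}}),\qquad \sigma(\bm\eta)=b\,\sigma_0(\bm\eta_{\mathrm{free}}).
\end{equation*}
The point is that, for fixed $\bm\eta_{\mathrm{free}}$, the map $(\ell,b)\mapsto(\mu,\sigma)$ is affine and triangular. Its Jacobian determinant is $\sigma_0$, which does not depend on $(\ell,b)$.

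Next we solve the system in Eq.~\eqref{eq:moment_matching}. Fix $\rho\in(0,\rho_{\mathrm{jam}})$ and an admissible $\bm\eta_{\mathrm{free}}$. The second equation gives $b=s(\rho;\bm\varphi,\bm h)/\sigma_0(\bm\eta_{\mathrm{free}})$. Substituting this into the first equation gives $\ell=m(\rho;\bm\varphi,\bm h)-s(\rho;\bm\varphi,\bm h)\,\mu_0/\sigma_0$. Because the two equations are solved sequentially and each step is forced, any solution must have this form, so uniqueness follows directly. Existence follows by substituting back.

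The main obstacle is checking that the solution lies in the admissible parameter set, namely $b>0$ and finite, which requires $0<\sigma_0<\infty$. Finiteness holds because second moments exist. Strict positivity requires that the standardized law is non-degenerate. We will argue that a genuine probability density $g$ cannot concentrate on a single point, so $\mathrm{Var}(Z)>0$.

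Combining $\sigma_0>0$ with the constraint in Eq.~\eqref{eq:boundary_constraints} that $0<s(\rho)<\infty$ on intermediate densities, which the semiparametric $s$ satisfies by construction, gives $0<b<\infty$. Finiteness of $m$ and $\mu_0$ makes $\ell$ finite. Hence $(\ell,b)$ is the unique admissible solution.

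We will also note why the argument excludes the endpoints $\rho\in\{0,\rho_{\mathrm{jam}}\}$. There $s=0$ would force $b=0$, which is a degenerate limit outside the family, so those densities must be handled separately as limiting point masses.
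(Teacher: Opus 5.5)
Your proposal is correct and follows essentially the same route as the paper: you write $q=\ell+bZ$, reduce the moments to $\ell+b\mu_0$ and $b\sigma_0$, and solve the triangular system to get $b=s/\sigma_0$ and $\ell=m-s\mu_0/\sigma_0$. Your explicit check that $\sigma_0>0$ (an absolutely continuous $Z$ cannot be almost surely constant) supplies a point the paper uses only implicitly when it divides by $\sigma_0$.
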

\begin{proof}
Let $\bm\eta_{\mathrm{con}} = [\eta_1, \eta_2]^\mathsf{T}$, where $\eta_1$ is the location parameter and $\eta_2 > 0$ is the scale parameter. By the definition of a location-scale family, a random variable $X \sim f_{\mathrm{PDF}}(q\mid\rho; \bm\eta)$ can be expressed as an affine transformation of a base random variable $Z$:
$$
X = \eta_1 + \eta_2Z,
$$
where $Z$ follows the standardized distribution with location $\eta_1=0$ and scale $\eta_2=1$. 
Crucially, since the shape of $Z$ is governed solely by the unconstrained parameters $\bm{\eta}_{\mathrm{free}}$, we can express its expectation and variance as explicit functions of $\bm{\eta}_{\mathrm{free}}$, denoted by $\mu_0(\cdot)$ and $\sigma_0^2(\cdot)$ respectively:
$$
\mathbb{E}[Z] = \mu_0(\bm\eta_{\mathrm{free}}), \quad \mathrm{Var}[Z] = \sigma_0^2(\bm\eta_{\mathrm{free}}).
$$
By the linearity of expectation and properties of variance, the moments of $X$ are given by:
$$
\mathbb{E}[X] = \eta_1 + \eta_2\mu_0(\bm\eta_{\mathrm{free}}),\quad \mathrm{Var}[X] =\eta_2^2\sigma_0^2(\bm\eta_{\mathrm{free}}).
$$
For any given density $\rho \in (0, \rho_{\mathrm{jam}})$, the corresponding flow expectation $m(\rho)$ and standard deviation $s(\rho)$ are admissible provided they satisfy the boundary constraints in Eq.~\eqref{eq:boundary_constraints}, i.e., 
$$
0 <m(\rho)< \infty, \quad 0<s(\rho) < \infty.
$$
To enforce consistency, the moment-matching system is specified as follows:
$$
\eta_1 + \eta_2\mu_0(\bm\eta_{\mathrm{free}}) = m(\rho),\quad \eta_2^2\sigma_0^2(\bm\eta_{\mathrm{free}}) = s^2(\rho).
$$
Since $\eta_2 > 0$ and $s(\rho) > 0$, taking the positive square root of the variance equation yields the unique solutions for the location and scale parameters:$$\eta_1= m(\rho)-s(\rho)\frac{\mu_0(\bm\eta_{\mathrm{free}}) }{\sigma_0(\bm\eta_{\mathrm{free}})}, \quad \eta_2 = \frac{s(\rho)}{\sigma_0(\bm\eta_{\mathrm{free}})}.$$
\end{proof}


The above proposition guarantees that the moment-matching system
admits a unique solution for location-scale families of distributions.
Consequently, once the semiparametric functions $m(\rho;\bm \varphi,\bm h)$ and $s(\rho;\bm \varphi,\bm h)$ and the unconstrained parameters $\bm{\eta}_{\mathrm{free}}$ are specified, the constrained parameters $\bm{\eta}_{\mathrm{con}}$ are uniquely determined.
Because the latent correction term $\bm{h}$ and the unconstrained
parameters $\bm{\eta}_{\mathrm{free}}$ are generated by the neural
network defined in Eq.~\eqref{eq:dnn}, the complete parameter vector
$\bm{\eta}$ becomes a deterministic function of density $\rho$,
parameterized by the neural network weights $\bm{\theta}$ and the
physical parameters $\bm{\varphi}$. We denote this mapping as
\begin{equation}
\bm{\eta}^\mathsf T \left(\rho;\bm{\theta},\bm{\varphi}\right) 
= \begin{bmatrix}
    \bm{\eta}_\mathrm{con}^\mathsf T\left(\rho;\bm{\theta},\bm{\varphi}\right) & \bm{\eta}_\mathrm{free}^\mathsf T\left(\rho;\bm{\theta}\right)
\end{bmatrix}
\end{equation}
The resulting modeling procedure can therefore be summarized as the
following computational chain:
\begin{equation}\label{eq:comp_chain}
\rho
\xrightarrow[\text{Eq.~\eqref{eq:dnn}}]{\bm{\theta}}
(\bm{h}, \bm{\eta}_{\mathrm{free}})
\xrightarrow[\text{Eq.~\eqref{eq:moment_matching}}]{\bm{\varphi}}
(\bm{\eta}_{\mathrm{con}}, \bm{\eta}_{\mathrm{free}})
\equiv
\bm{\eta}
\longrightarrow
f_{\mathrm{PDF}}(q \mid \rho; \bm{\eta}).
\end{equation}
Finally, model parameters are estimated via maximum likelihood using
the observed dataset $\{(\rho_i, q_i)\}_{i=1}^N$:
\begin{equation}
\hat{\bm{\theta}}, \hat{\bm{\varphi}}
=
\arg\max_{\bm{\theta}, \bm{\varphi}}
\sum_{i=1}^N
\log f_{\mathrm{PDF}}\!\left(q_i \mid \rho_i;
\bm{\eta}(\rho_i; \bm{\theta}, \bm{\varphi})\right).
\end{equation}
The semiparametric framework is therefore well defined for
location-scale families of distributions, ensuring that the resulting
conditional distribution satisfies the imposed physical constraints by
model construction. 
Other distribution families may also be incorporated into the framework, although additional modifications may be required. The extension of the framework will be further discussed in Section~\ref{sec:discussion}. The overall modeling procedure is intuitively summarized in Fig.~\ref{fig:flowchart}. 

\begin{figure}
     \centering    \includegraphics[width=\linewidth]{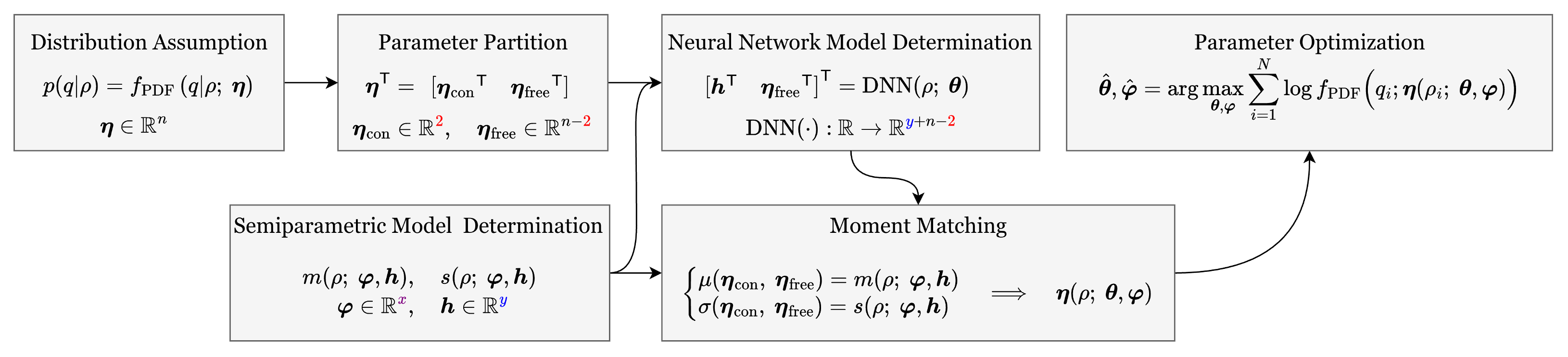}
     \caption{Flowchart for Stochastic FD modeling under Semiparametric Framework}
    \label{fig:flowchart}
\end{figure}

\subsection{Structure Specification of Semiparametric Functions} \label{sec:semiparametric-models}
This section specifies candidate functional forms for the expectation $m(\rho)$ and standard deviation $s(\rho)$ of
traffic flow $q$ given density $\rho$, associated with the conditional distribution $p(q \mid \rho)$. 
Under the physical constraints in Eq.~\eqref{eq:boundary_constraints}, both $m(\rho)$ and $s(\rho)$ belong to the same function class that satisfies specific boundary conditions.

\begin{definition}[Class of Physically Admissible Functions]
Let $\mathcal{F}_{\mathrm{phy}}$ denote the class of functions
\begin{equation}
\mathcal{F}_{\mathrm{phy}}
=
\bigl\{
f \in C([0,\rho_{\mathrm{jam}}]) \mid
f(\rho)\ge 0,\;
f(0)=0,\;
f(\rho_{\mathrm{jam}})=0
\bigr\},
\end{equation}
where $C([0,\rho_{\mathrm{jam}}])$ denotes the space of continuous
real-valued functions defined on the interval $[0,\rho_{\mathrm{jam}}]$.
\end{definition}

Among functions in $\mathcal{F}_{\mathrm{phy}}$, the quadratic
flow-density relationship implied by the classical Greenshields model represents the simplest parametric example.
Building on this baseline, we introduce two semiparametric extensions that preserve physical admissibility while enabling data-driven flexibility through NN-based correction terms.

\subsubsection{Quadratic Function with Neural Correction (QwNC)}
The classical Greenshields model assumes a linear relationship between traffic density $\rho$ and space-mean speed $v$. Since traffic flow is defined as $q=\rho v$, this assumption implies a quadratic relationship between density and flow. The Greenshields formulation is given by
\begin{equation}
v_{\mathrm{G}}(\rho)=v_{\mathrm{free}}\left(1-\frac{\rho}{\rho_{\mathrm{jam}}}\right), \qquad
q_{\mathrm{G}}(\rho)=\rho(\rho_{\mathrm{jam}}-\rho)\frac{v_{\mathrm{free}}}{\rho_{\mathrm{jam}}},
\end{equation}
where $v_{\mathrm{free}}$ denotes the free-flow speed and $\rho_{\mathrm{jam}}$ the jam density. The resulting quadratic function attains zero flow at both $\rho=0$ and $\rho=\rho_{\mathrm{jam}}$, and therefore belongs to $\mathcal{F}_{\mathrm{phy}}$.

To enhance modeling flexibility while preserving physical consistency, the constant coefficient $v_{\mathrm{free}}/\rho_{\mathrm{jam}}$ is replaced by a density-dependent correction term $c(\rho)$. This term corresponds to one element of the latent vector $\bm h$ generated by the neural network defined in Eq.~\eqref{eq:dnn}. The resulting Quadratic Function with Neural Correction is expressed as
\begin{equation}
f_{\mathrm{QwNC}}(\rho)=\rho(\rho_{\mathrm{jam}}-\rho)c(\rho).
\end{equation}
To ensure numerical stability and enforce physical non-negativity, all positive quantities are mapped through the Softplus function $\sigma(x)=\ln(1+e^{x})$. 
In particular, the jam density is parameterized as $\rho_{\mathrm{jam}}=\sigma(\rho_j)$, where $\rho_j$ is an element of the physical parameter vector $\bm\varphi$. The density-dependent correction term $c(\rho)$ is constrained analogously via $\sigma(c(\rho))$, yielding the final formulation
\begin{equation}
f_{\mathrm{QwNC}}(\rho)
=
\rho\bigl(\sigma(\rho_j)-\rho\bigr)\sigma\bigl(c(\rho)\bigr).
\end{equation}

Although the QwNC formulation is motivated by extending the mean flow
relationship implied by the Greenshields model, it satisfies the physical constraints in Eq.~\eqref{eq:boundary_constraints} by construction. Accordingly, the same functional form can be leveraged to model both the conditional expectation and the conditional standard deviation of traffic flow.


\subsubsection{Beta Function with Neural Correction (BwNC)}
To further enhance structural flexibility beyond the quadratic form, we introduce a functional specification inspired by the shape characteristics of the Beta distribution.
Specifically, two shape parameters, $a$ and $b$, are introduced to transform the quadratic term $\rho(\sigma(\rho_j)-\rho)$ into the more flexible form
$\rho^{a}\bigl(\sigma(\rho_j)-\rho\bigr)^{b}$.
The parameters $a$ and $b$ regulate the curvature near low and high density regimes, respectively, enabling the model to capture skewed or asymmetric patterns observed in empirical data.

To ensure physical admissibility and numerical stability, the Softplus function is applied to the exponents, enforcing positivity. In addition, the density difference term, $(\sigma(\rho_j)-\rho)$, is rectified using a $\max(0,\cdot)$ operator to prevent negative values, thereby guaranteeing that the power function remains well-defined across the entire density domain.
The resulting Beta Function with Neural Correction is given by
\begin{equation}
f_{\mathrm{BwNC}}(\rho)
=\rho^{\sigma(a)}
\max\bigl(0,\sigma(\rho_j)-\rho\bigr)^{\sigma(b)}
\sigma\bigl(c(\rho)\bigr).
\end{equation}
This formulation preserves the physical boundary conditions and provides additional flexibility to capture asymmetric relationships. 
The BwNC formulation can be used to model both the expectation and the standard deviation of conditional traffic flow distribution.

%
%
\section{Specification of SFD Models}
Given the semiparametric framework introduced in the preceding section,
a SFD model is fully specified once the conditional distribution $p(q \mid \rho)$ and the semiparametric forms of the conditional expectation $m(\rho)$ and standard deviation $s(\rho)$ are defined. 
In this section, the framework is instantiated by assuming that the conditional distribution follows a Skew-Normal distribution. For completeness, the Gaussian distribution is discussed as a special case obtained by setting skewness parameter as zero.

\subsection{Skew-Normal Distribution}
The stochastic fundamental diagram model is instantiated by assuming the conditional distribution $p(q \mid\rho)$ to follow the Skew-Normal distribution with parameter as $\bm\eta=\{\xi, \omega, \alpha\}$: 
\begin{equation}
\begin{aligned}
     p(q\mid\rho) &= f_{\mathrm{SkewNormal}}\left(q\mid\rho; \
         \xi, \omega, \alpha \right) \\
     &=\sqrt{\frac{2}{\pi}}\frac{1}{\omega}\exp\left(-\frac{1}{2} \left(\frac{q-\xi}{\omega}\right)^2 \right) \ \Phi\left(\alpha \cdot \frac{q-\xi}{\omega}\right) ,
\end{aligned}
\end{equation}
where $\Phi(\cdot)$ denote the standard normal cumulative distribution function.  
Following the Proposition~\ref{prop-uni}, location parameter $\xi$ and scale parameter $\omega$ are regarded as constrained, i.e., $\bm\eta_\mathrm{con}=[\xi,\ \omega]^\mathsf T$. While the skewness parameter $\alpha$ forms the unconstrained parameter $\bm\eta_\mathrm{free}=[\alpha]^\mathsf T$, which is learned directly from data.

\begin{table}[!tbp]
\centering
\caption{Summary of Functional Forms and Parameters for Stochastic FD Models under Skew-Normal Distribution}\label{tab:sn-model-sum}
\begin{tabular}{l|cc}
\toprule
 & \textbf{SN-QwNC} & \textbf{SN-BwNC} \\
\midrule
 $m(\rho;\ \bm\varphi, \bm h)$
&
$\rho   \bigl(\sigma(\rho_{j})-\rho\bigr)   \sigma\bigl(c_1 \bigr)$
&
$\rho^{\sigma(a_1)} 
\max\bigl(0, \ \sigma(\rho_{j})-\rho\bigr)^{\sigma(b_1)} 
\sigma\bigl(c_1\bigr)$ \\
 $s(\rho;\ \bm\varphi, \bm h)$
&
$\rho   \bigl(\sigma(\rho_{j})-\rho\bigr)   \sigma\bigl(c_2\bigr)$
&
$\rho^{\sigma(a_2)} 
\max\bigl(0, \ \sigma(\rho_{j})-\rho\bigr)^{\sigma(b_2)} 
\sigma\bigl(c_2\bigr)$\\

$\bm \varphi$ & $\rho_j$ & $\rho_j, a_1, b_1, a_2, b_2$\\

$\bm h$ & $c_1, c_2$ & $c_1, c_2$\\
DNN mapping & \multicolumn{2}{c}{$\rho\rightarrow [c_1\quad c_2 \quad \alpha] ^\mathsf T $}\\
$\mathrm{DNN}(\rho; \bm\theta)$ & \multicolumn{2}{c}{
$\bm W_3 \ \mathrm{silu}(\bm W_2 \ \mathrm{silu}(\bm W_1 \rho + \bm b_1) + \bm b_2) + \bm b_3$
}\\
$\bm\theta$ & \multicolumn{2}{c}{$\bm W_1, \bm W_2, \bm W_3, \bm b_1, \bm b_2, \bm b_3$}\\
\bottomrule
\end{tabular}
\end{table}

Two model variants based on Skew-Normal distribution, namely SN-QwNC and SN-BwNC, are established based on the functional forms defined in Section \ref{sec:semiparametric-models}. The SN-QwNC model applies the $f_\mathrm{QwNC}$ for functions of the expectation $m(\rho;\bm\varphi,\bm h)$ and standard deviation $s(\rho;\bm\varphi,\bm h)$, whereas the SN-BwNC model utilizes the  $f_\mathrm{BwNC}$. A summary of the functional forms and parameter sets is provided in Table \ref{tab:sn-model-sum}.
For both the SN-QwNC and SN-BwNC specifications, the vector $\bm h$ contains two correction components, one associated with the conditional expectation and one with the conditional standard deviation. The domain-specific parameter vector
$\bm\varphi$ differs across the two variants: in the SN-QwNC model, $\bm\varphi$ contains only density-related parameters (e.g., jam density), whereas in the SN-BwNC model it additionally includes parameters controlling the curvature of
the flow--density relationship. 

The correction terms $\bm h$ and the skewness parameter $\alpha$ are learned jointly using a neural network. Specifically, a three-layer multilayer perceptron (MLP) with Sigmoid Linear Unit (SiLU)
activation functions \citep{hendrycks2016gaussian} is employed.



The constrained parameters $\{\xi,\omega\}$ are determined by matching the theoretical moments of the Skew-Normal distribution to the semiparametric functions $m(\rho;\bm\varphi,\bm h)$ and $s(\rho;\bm\varphi,\bm h)$. This yields the system of equations as follows:
\begin{equation}
\begin{aligned}
\xi + \omega\left(\frac{\alpha^2}{1+\alpha^2}\frac{2}{\pi}\right)^{1/2}
&= m(\rho;\bm\varphi,\bm h),\\
\omega^2\left(1-\frac{\alpha^2}{1+\alpha^2}\frac{2}{\pi}\right)
&= s^2(\rho;\bm\varphi,\bm h),
\end{aligned}
\end{equation}
from which $\xi$ and $\omega$ can be expressed explicitly as functions of $\alpha$, $m(\rho;\bm\varphi,\bm h)$, and $s(\rho;\bm\varphi,\bm h)$, i.e.,
\begin{equation}\label{eq:mmsk}
\begin{aligned}
\omega &=
s(\rho;\bm\varphi,\bm h)
\left(1-\frac{\alpha^2}{1+\alpha^2}\frac{2}{\pi}\right)^{-1/2},\\
\xi &=
m(\rho;\bm\varphi,\bm h)
-\omega\left(\frac{\alpha^2}{1+\alpha^2}\frac{2}{\pi}\right)^{1/2}.
\end{aligned}
\end{equation}

Consequently, all distributional parameters $\{\xi,\omega,\alpha\}$ become
deterministic functions of the density $\rho$, parameterized by the neural network weights $\bm\theta$ and the domain-specific parameters $\bm\varphi$. Model parameters are learned by minimizing a composite loss function consisting of the negative log-likelihood and a jam-density regularization term, i.e.,
\begin{equation}
    \ell = \ell_{\mathrm{NLL}} + \lambda\cdot\ell_{\mathrm{Reg}}.
\end{equation}
Given a dataset $\mathcal{D}=\{(\rho_i,q_i)\}_{i=1}^N$ of observed density-flow pairs, the negative log-likelihood under the Skew-Normal model is
\begin{equation}\label{eq:nll}
\begin{aligned}
\ell_{\mathrm{NLL}}
&= -\log\!\left(\prod_{i=1}^{N} p(q_i|\rho_i)\right) \\
&= N\log\omega_i
+ \frac{1}{2}\sum_{i=1}^{N}\left(\frac{q_i-\xi_i}{\omega_i}\right)^2
- \sum_{i=1}^{N}\log \Phi\!\left(\alpha_i \frac{q_i-\xi_i}{\omega_i}\right),
\end{aligned}
\end{equation}
where
\[
\bm h_i,\, \alpha_i = \mathrm{DNN}(\rho_i; \bm\theta),
\]
\[
\omega_i = s(\rho_i;\bm\varphi,\bm h_i)
\left(1-\frac{\alpha_i^2}{1+\alpha_i^2}\frac{2}{\pi}\right)^{-1/2},
\]
\[
\xi_i =
m(\rho_i;\bm\varphi,\bm h_i)
-\omega_i
\left(\frac{\alpha_i^2}{1+\alpha_i^2}\frac{2}{\pi}\right)^{1/2}.
\]
A regularization term is introduced to stabilize the optimization process and to enforce physical consistency of the estimated jam density:
\begin{equation}\label{eq:reg}
\ell_{\mathrm{Reg}} =
\sum_{i=1}^N \max\bigl(\rho_i - \sigma(\rho_j),\, 0\bigr),
\end{equation}
where $\sigma(\rho_j)$ denotes the jam density reparameterized by Softplus function. The rationale of this regularization is to encourage the learned jam density to be no smaller than any observed traffic density in the dataset. Whenever the estimated jam density falls below observed densities, a penalty is incurred to prevent physically implausible solutions. No penalty is applied when the estimated jam density exceeds all observed densities.

A relatively large weighting factor $\lambda$, e.g., $\lambda=10^2$, is employed to ensure effective enforcement of the regularization during the training phase. In practice, the regularization rapidly guides the jam density estimate toward a physically admissible range while maintaining stable convergence of the likelihood-based optimization.

\subsection{Normal Distribution}
The Normal distribution can be considered as a degenerate case of the Skew-Normal distribution obtained by setting the skewness parameter $\alpha$ to zero. Under this assumption, the conditional distribution of traffic flow given density becomes symmetric, and only the location and scale parameters, $\xi$ and $\omega$, are required. Consistent with the semiparametric framework, these parameters are
treated as constrained parameters $\bm\eta_{\mathrm{con}}$, while no distributional parameter is left as free, i.e.,
$\bm\eta_{\mathrm{free}}=\emptyset$.
The conditional probability density function is given by
\begin{equation}
p(q\mid\rho)
=\frac{1}{\sqrt{2\pi}\,\omega}
\exp\!\left(
-\frac{1}{2}\left(\frac{q-\xi}{\omega}\right)^2
\right).
\end{equation}
Analogous to the Skew-Normal specification, two Normal distribution based SFD models are defined: N-QwNC and N-BwNC. These models employ the same semiparametric functional forms for the conditional expectation and standard deviation as their Skew-Normal counterparts. The only structural difference lies in the
neural network output, which no longer includes a skewness parameter. A summary of the functional forms and associated parameters is provided in Table~\ref{tab:n-model-sum}.

\begin{table}[!tb]
\centering
\caption{Summary of Functional Forms and Parameters for Stochastic FD Models under Normal Distribution}\label{tab:n-model-sum}
\begin{tabular}{l|cc}
\toprule
 & \textbf{N-QwNC} & \textbf{N-BwNC} \\
\midrule
 $m(\rho;\ \bm\varphi, \bm h)$
&
$\rho  \bigl(\sigma(\rho_{j})-\rho\bigr)  \sigma\bigl(c_1 \bigr)$
&
$\rho^{\sigma(a_1)} 
\max\bigl(0, \ \sigma(\rho_{j})-\rho\bigr)^{\sigma(b_1)} 
\sigma\bigl(c_1\bigr)$ \\
 $s(\rho;\ \bm\varphi, \bm h)$
&
$\rho   \bigl(\sigma(\rho_{j})-\rho\bigr)   \sigma\bigl(c_2\bigr)$
&
$\rho^{\sigma(a_2)} 
\max\bigl(0, \ \sigma(\rho_{j})-\rho\bigr)^{\sigma(b_2)} 
\sigma\bigl(c_2\bigr)$\\

$\bm \varphi$ & $\rho_j$ & $\rho_j, a_1, b_1, a_2, b_2$\\

$\bm h$ & $c_1, c_2$ & $c_1, c_2$\\
DNN mapping & \multicolumn{2}{c}{$\rho\rightarrow [c_1\quad c_2 ] ^\mathsf T $}\\
$\mathrm{DNN}(\rho; \bm\theta)$ & \multicolumn{2}{c}{
$\bm W_3 \ \mathrm{silu}(\bm W_2 \ \mathrm{silu}(\bm W_1 \rho + \bm b_1) + \bm b_2) + \bm b_3$
}\\
$\bm\theta$ & \multicolumn{2}{c}{$\bm W_1, \bm W_2, \bm W_3, \bm b_1, \bm b_2, \bm b_3$}\\
\bottomrule
\end{tabular}
\end{table}

For the Normal distribution, the constrained parameters are obtained as a special case of Eq~\eqref{eq:mmsk} by setting the skewness parameter $\alpha=0$. Under this condition, the location and scale parameters reduce to
\begin{equation}
\omega = s(\rho;\bm\varphi,\bm h),
\qquad
\xi = m(\rho;\bm\varphi,\bm h).
\end{equation}
Since there is no skewness parameter, the neural network only outputs the correction terms associated with the semiparametric functions.
Given a dataset
$\mathcal{D}=\{(\rho_i,q_i)\}_{i=1}^N$, the loss function is defined as:
\begin{equation}
\begin{aligned}
    \ell &= \ell_{\mathrm{NLL}} + \lambda    \ell_{\mathrm{Reg}}\\
    &= \sum_{i=1}^N \log s(\rho_i;\bm\varphi,\bm h)
    +
    \frac{1}{2}
    \sum_{i=1}^N
    \left(
    \frac{q_i-m(\rho_i;\bm\varphi,\bm h)}
    {s(\rho_i;\bm\varphi,\bm h)}
    \right)^2
    + \lambda \sum_{i=1}^N \max\bigl(\rho_i - \sigma(\rho_j),\, 0\bigr).
\end{aligned}
\end{equation}

\section{Computational Experiments}
\subsection{MAGIC Dataset} 
The proposed models are evaluated using the MAGIC (Multiple Conditions UAV Group-based High-fidelity Comprehensive Vehicle Trajectory) dataset \citep{ma2022magic}, which provides high-resolution vehicle trajectory data
collected by unmanned aerial vehicles (UAVs). The data were recorded over a three-hour period (7{:}40--10{:}40 AM) along a 4~km segment of the Shanghai Inner Ring Road.

The dataset contains detailed information on vehicle type, position, velocity, and acceleration at a sampling frequency of 25~Hz. From the full dataset, a 200~m two-lane road segment was extracted to ensure coverage of a wide range of traffic conditions, spanning free-flow to congested regimes.
Traffic variables were aggregated into $1$ second intervals. The aggregated speed is measured in kilometers per hour (km/h), while traffic density is expressed in vehicles per kilometer per lane (veh/km/lane). Traffic flow is computed as the product of density and speed, with units of vehicles per hour per lane (veh/h/lane).


\subsection{Experiment Setup}
A major challenge in our evaluation is the limited and imbalanced nature of the dataset. Since only three hours of tabular data were collected from a single location, the limited data volume introduces high variance in the evaluation metrics. Furthermore, the skewed distribution toward low-density samples could overemphasize model performance in the free-flow state while ignoring the performance in the congested state.

Stratified $k$-fold cross-validation combined with a sample-weighting scheme (as illustrated in Figure \ref{fig:exp_setup}) is used to address the aforementioned challenge. 
The procedure is detailed as follows:
\begin{itemize}
    \item {Stratified Partitioning}. The entire density range is partitioned into $b=10$ equal bins, with $N_m$ data points in $m$-th bin. The data within each bin is further split into $k=5$ equal folds. A complete fold is constructed by taking one partition from each bin without replacement, resulting in a total of $k$ folds.
    \item {Sample Weighting}.
    For each experimental run, one fold $\mathcal{T}_j$ ($j \in \{1, \dots, k\}$) serves as the test set, while the remaining $k-1$ folds are used for training. 
    Each test sample $i \in \mathcal{T}_j$ is weighted inversely proportional to its bin's total population. Specifically, the weight is 
    \begin{equation}
        w_i = \frac{1}{N_{b(i)}},
    \end{equation}
     where $b(i)$ is the bin index of sample $i$.
\end{itemize}
The weight $w_i$ is explicitly integrated into the calculation of each specific evaluation metric (detailed in Section \ref{sec:met}) to ensure an unbiased assessment. Let $S_j$ denote the aggregated score of a given metric evaluated on the test fold $\mathcal{T}_j$. 
After iterating through all $k$ folds, the model's final performance is summarized by the mean ($\mu_S$) and standard deviation ($\sigma_S$) of the $k$ aggregated scores:
\begin{equation}
    \mu_S = \frac{1}{k} \sum_{j=1}^k S_j, \quad \sigma_S = \sqrt{\frac{1}{k} \sum_{j=1}^k \left(S_j - \mu_S\right)^2}.
\end{equation}

\begin{figure}[!tbp]
    \centering
    \includegraphics[width=1\linewidth]{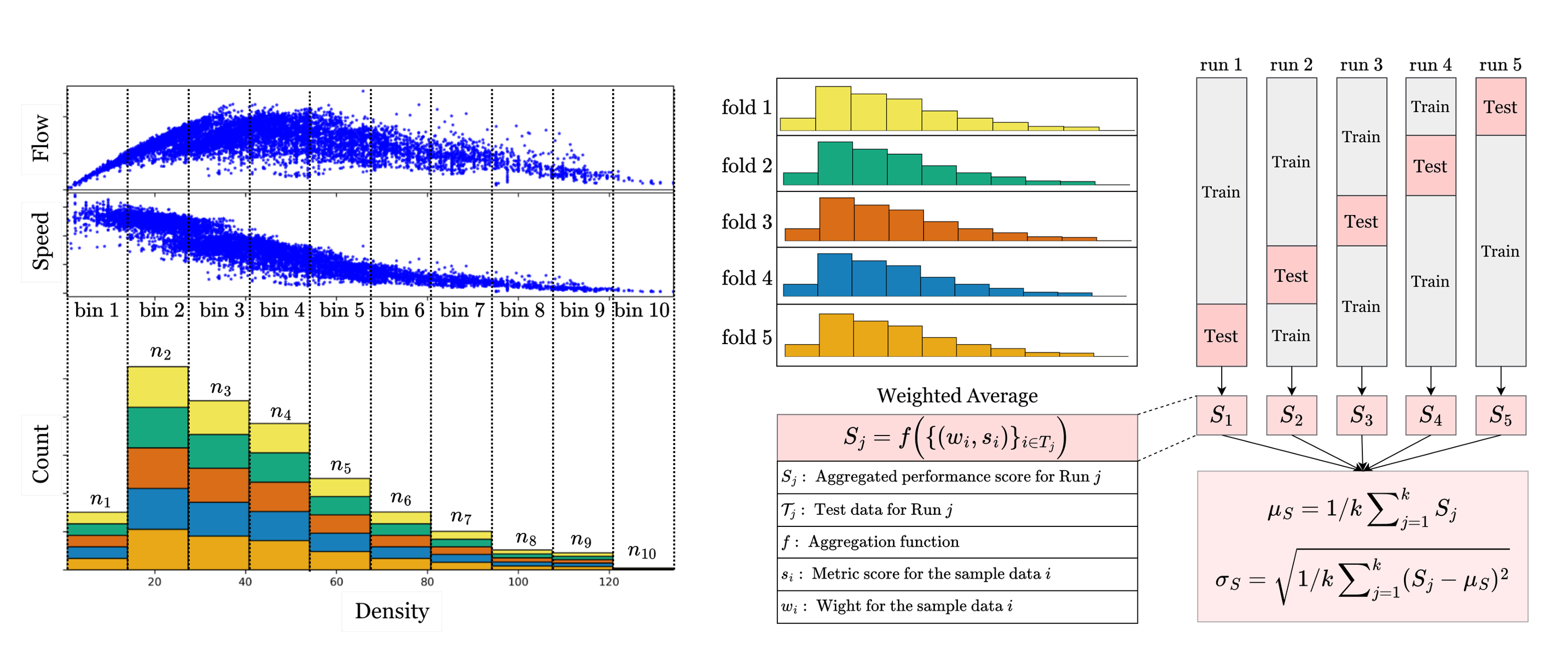}
    \caption{Stratified 5-fold cross-validation combined with a sample-weighting scheme}
    \label{fig:exp_setup}
\end{figure}

\subsection{Metrics}\label{sec:met} 
Five metrics are used for a comprehensive evaluation of the proposed models. The first two are probabilistic metrics, which evaluate the quality of the predicted output distribution. The remaining three are deterministic metrics, which assess the accuracy of the predicted conditional mean.
For notation clarity across all metrics, let $q_i$ denote the ground truth flow observation for sample $i$. Let $f_{\mathrm{PDF}}\left(\cdot \mid \rho_i; \bm{\eta}(\rho_i; \bm{\theta}, \bm{\varphi})\right)$ denote the predicted conditional probability density function given density $\rho_i$, where the distribution parameters $\bm{\eta}$ are produced by a model parameterized by learnable weights $\bm{\theta}$ and $\bm{\varphi}$.

\subsubsection{Probabilistic Metrics}
\paragraph{Weighted Continuous Ranked Probability Score (WCRPS)}
The CRPS is a proper scoring rule commonly used to evaluate the probabilistic forecasts. For a predictive distribution $f$ and an actual observation $y$, it is defined as:
\begin{equation}
    \text{CRPS}(f, y) = \int_{-\infty}^{\infty} \left( \int_{-\infty}^{x} f(t)\mathrm{d}t -\mathbf{1}_{x \geq y}\right)^2 \mathrm{d}x .
\end{equation}
Since integrating the cumulative distribution is often computationally intractable, CRPS can be approximated in a expectation form using Monte Carlo simulation \citep{gneiting2007strictly}:
\begin{equation}
    \text{CRPS}(f,y) = \mathbb{E}_{X\sim f}\bigl[|X-y|\bigl] - \frac{1}{2}\mathbb{E}_{X,X'\sim f}\bigl[|X-X'|\bigl].
\end{equation}
When incorporating the sample weights $w_i$, the aggregated Weighted CRPS for the $j$-th test fold $\mathcal{T}_j$ is calculated as:
\begin{equation}
    S_{j, \mathrm{WCRPS}} = \frac{\sum_{i \in \mathcal{T}_j} w_i \cdot \text{CRPS}\left(f_{\mathrm{PDF}}\!\left(\cdot \mid \rho_i; \bm{\eta}(\rho_i; \bm{\theta}, \bm{\varphi})\right), q_i\right)}{\sum_{i \in \mathcal{T}_j} w_i}.
\end{equation}
\paragraph{Weighted Negative Log-Likelihood (WNLL)}
The Negative Log-Likelihood measures the goodness-of-fit by penalizing low probabilities assigned to the true observations. By incorporating the weight, the WNLL for the test fold is given by:
\begin{equation}
    S_{j, \mathrm{WNLL}} = -\frac{\sum_{i \in \mathcal{T}_j} w_i \cdot \log f_{\mathrm{PDF}}\!\left(q_i \mid \rho_i; \bm{\eta}(\rho_i; \bm{\theta}, \bm{\varphi})\right)}{\sum_{i \in \mathcal{T}_j} w_i}.
\end{equation}

\subsubsection{Deterministic Metrics}
For deterministic evaluation, we collapse the predictive distribution into a single point estimate $\hat{q}_i$ by taking its mathematical expectation:
\begin{equation}
    \hat{q}_i = \mathbb{E}_{X \sim f_{\mathrm{PDF}}\!\left(\cdot \mid \rho_i; \bm{\eta}(\rho_i; \bm{\theta}, \bm{\varphi})\right)}[X].
\end{equation}

\paragraph{Weighted Mean Absolute Error (WMAE)} The WMAE measures the average magnitude of the absolute errors, weighted by the density bin frequency to prevent bias toward free-flow states:
\begin{equation}
    S_{j, \mathrm{WMAE}} = \frac{\sum_{i \in \mathcal{T}_j} w_i |\hat{q}_i - q_i|}{\sum_{i \in \mathcal{T}_j} w_i}.
\end{equation}

\paragraph{Root Weighted Mean Square Error (RWMSE)}
To heavily penalize larger prediction errors while accounting for the skewed dataset, we compute the Root Weighted Mean Square Error as follows:
\begin{equation}
    S_{j, \mathrm{RWMSE}} = \sqrt{\frac{\sum_{i \in \mathcal{T}_j} w_i (\hat{q}_i - q_i)^2}{\sum_{i \in \mathcal{T}_j} w_i}}.
\end{equation}

\paragraph{Weighted Mean Absolute Percentage Error (WMAPE)}
To provide a relative error perspective, we define the WMAPE as the ratio of the weighted absolute errors to the weighted absolute true values:
\begin{equation}
    S_{j, \mathrm{WMAPE}} = \frac{\sum_{i \in \mathcal{T}_j} w_i |\hat{q}_i - q_i|}{\sum_{i \in \mathcal{T}_j} w_i |q_i|}.
\end{equation}

\subsection{Model Setup}
During the training phase, the model (configured with a hidden size of 16) is updated in mini-batches of 128 via the Adam optimizer \citep{kingma2014adam}. We employ a weight decay of $10^{-5}$ for regularization, alongside $\beta$ coefficients of $(0.9, 0.99)$. The learning rate is initially set to $10^{-2}$ and explicitly decayed to $10^{-3}$ halfway through the 200-epoch training process. 
To ensure numerical stability, $\log\Phi(\cdot)$ is computed using the optimized function \texttt{log\_ndtr} available in PyTorch \citep{paszke2019pytorch} and JAX \citep{frostig2019compiling}.

\subsection{Baselines}
This study utilizes three recently proposed stochastic FD models as baselines, which are representative of two prevailing modeling strategies: 1) random parameter modeling and 2) stochastic error/variance modeling. A common principle in both strategies is the modification of deterministic FD models. To obtain accurate deterministic FD models, the weighted least squares method, adopted in \citep{qu2015fundamental}, is applied to mitigate the influence of unbalanced data during the training phase. 

\subsubsection{Random parameter modeling}
The work of \citet{cheng2024analytical} extends the deterministic S3 model \citep{cheng2021s} by introducing stochasticity into its parameters. Since the original formulation is not explicitly named in the original paper, we refer to it as \textit{S3+LN} for clarity. The model is defined as follows. The S3 speed-density and flow-density relations are:
\begin{equation}\label{eq:s3+ln}
\begin{aligned}
    v_{\mathrm{S3}}(\rho) =
    \frac{v_{\mathrm{free}}}
    {\left(1+\left(\frac{\rho}{\rho_{\mathrm{critical}}}\right)^{m}\right)^{\frac{2}{m}}},
    &\quad
    q_{\mathrm{S3}}(\rho) =
    \frac{\rho\,v_{\mathrm{free}}}
    {\left(1+\left(\frac{\rho}{\rho_{\mathrm{critical}}}\right)^{m}\right)^{\frac{2}{m}}} 
\end{aligned}
\end{equation}
In the stochastic formulation, the free-flow speed and critical speed
are treated as random variables following Log-Normal distributions:
\begin{equation}
\begin{aligned}
    \ln(v_{\mathrm{free}})\sim \mathcal{N}(\mu_{f}, \sigma^2_{f}), 
    &\quad
    \ln(v_{\mathrm{critical}}) \sim \mathcal{N}(\mu_{c}, \sigma^2_{c}),
\end{aligned}
\end{equation}
and the parameter follows
\begin{equation}
    \frac{1}{m}\sim\mathcal{N}
    \left(
        \frac{\mu_{f}-\mu_{c}}{2\ln2},
        \frac{\sigma^2_{f}+\sigma^2_{c}}{(2\ln2)^2}
    \right).
\end{equation}
This framework contains five trainable parameters: $\{\mu_{f}, \sigma_{f}, \mu_{c}, \sigma_{c}, \rho_{\mathrm{critical}}\}$. The free-flow speed and critical speed are parameterized by $\{\mu_{f}, \sigma_{f}\}$ and $\{\mu_{c}, \sigma_{c}\}$, respectively. The critical density $\rho_{\mathrm{critical}}$ is estimated using the weighted least squares approach. 

\subsubsection{Stochastic Error/Variance Modeling}
The work of \citet{lei2024unraveling} extends deterministic fundamental
diagram (FD) models by modeling the residual error using a Gaussian
Process (GP). In this formulation, the overall speed $v(\rho)$ is
represented as the sum of a deterministic baseline model
$f_{\mathrm{det}}(\rho)$ and a stochastic residual term
$\epsilon(\rho)$:
\begin{equation}\label{eq:gp}
\begin{aligned}
    v(\rho) &= f_{\mathrm{det}}(\rho) + \epsilon(\rho), \\
    \epsilon(\rho) &\sim \mathcal{GP}\bigl(0, k(\rho,\rho')\bigr), 
\end{aligned}
\end{equation}
where the error term $\epsilon(\rho)$ follows a zero-mean Gaussian Process governed by a Radial Basis Function (RBF) kernel $k(\rho,\rho')$:
\begin{equation}
    k(\rho,\rho') = \sigma^2\exp\left(-\frac{(\rho-\rho')^2}{2l^2}\right).
\end{equation}
This kernel introduces two additional trainable hyperparameters: the signal variance $\sigma^2$ and the length-scale $l$, which control the magnitude and smoothness of the stochastic component, respectively.
The Greenshields and S3 deterministic models are used as baselines for the GP framework, resulted in two variants of models denoted as \textit{GS+GP} and \textit{S3+GP}.

\section{Results} 
\subsection{Metric Performance}
\begin{table}[!tb]\centering
\caption{Performance of all evaluated stochastic FD models}\label{tab:estimation_performance}
{\small
\begin{tabular}{lccccc}
\toprule
\multicolumn{6}{c}{\textbf{Flow-Density Relation}}\\
\midrule
Model & CRPS $(\downarrow)$ & NLL $(\downarrow)$ & MAE $(\downarrow)$ & RWMSE $(\downarrow)$ & WMAPE\% $(\downarrow)$ \\
\midrule
S3 + LN \citep{cheng2024analytical}  &$312.977_{\pm12.062}$ & -- &$448.944_{\pm9.658}$ &$551.809_{\pm10.923}$ &$45.594_{\pm1.175}$\\
S3 + GP \citep{lei2024unraveling}&$165.896_{\pm1.902}$ &$7.120_{\pm0.013}$ & $179.611_{\pm3.073}$ & $270.631_{\pm6.595}$ & $18.238_{\pm0.216}$\\
GS + GP \citep{lei2024unraveling}&$172.785_{\pm2.587}$ &$7.139_{\pm0.013}$&  $204.973_{\pm5.739}$ &$274.391_{\pm8.685}$ & $20.816_{\pm0.632}$\\
\midrule
SN-QwNC &$124.936_{\pm3.629}$ & $6.663_{\pm0.037}$ & $172.636_{\pm5.740}$ & $247.023_{\pm8.740}$ & $17.529_{\pm0.531}$\\ 
N-QwNC  & $127.917_{\pm 2.564}$ & $6.749_{\pm 0.040}$ &$173.303_{\pm4.080}$ & $247.880_{\pm7.560}$ &$17.597_{\pm0.338}$\\
SN-BwNC &$\textbf{122.643}_{\pm3.864}$ & $\textbf{6.633}_{\pm0.053}$ & $\textbf{171.736}_{\pm4.616}$ &$\textbf{246.799}_{\pm8.124}$ &$\textbf{17.438}_{\pm0.413}$\\
N-BwNC  & $124.444_{\pm 4.020}$ & $6.683_{\pm 0.046}$ & $172.594_{\pm5.425}$ & $247.220_{\pm8.621}$ & $17.525_{\pm0.488}$\\

\midrule
\multicolumn{6}{c}{\textbf{Speed-Density Relation}}\\
\midrule
Model & CRPS $(\downarrow)$ & NLL $(\downarrow)$ & MAE $(\downarrow)$ & RWMSE $(\downarrow)$ & WMAPE\% $(\downarrow)$ \\
\midrule
S3 + LN \citep{cheng2024analytical} &$4.566_{\pm0.130}$ & -- &$6.637_{\pm0.142}$ &$7.795_{\pm0.180}$ & $24.756_{\pm0.573}$\\
S3 + GP \citep{lei2024unraveling}&$2.997_{\pm0.049}$ &$3.166_{\pm0.014}$ & $3.700_{\pm0.061}$ & $6.894_{\pm0.161}$ & $13.801_{\pm0.254}$\\
GS + GP \citep{lei2024unraveling}&$3.069_{\pm0.034}$ &$3.185_{\pm0.013}$&  $3.929_{\pm0.072}$ &$5.493_{\pm0.149}$ & $14.654_{\pm0.291}$\\
\midrule
SN-QwNC &$2.590_{\pm0.072}$ & $2.708_{\pm0.039}$ & $3.605_{\pm0.084}$ & $5.320_{\pm0.140}$ & $13.448_{\pm0.338}$\\ 
N-QwNC  & $2.661_{\pm0.047}$ & $2.795_{\pm0.041}$ & $3.634_{\pm0.062}$ & $5.368_{\pm0.110}$ & $13.556_{\pm0.259}$\\
SN-BwNC & $\textbf{2.548}_{\pm0.057}$ & $\textbf{2.679}_{\pm0.054}$ & $\textbf{3.594}_{\pm0.069}$ & $\textbf{5.322}_{\pm0.128}$ & $\textbf{13.407}_{\pm0.279}$\\
N-BwNC  & $2.592_{\pm0.059}$ & $2.728_{\pm0.048}$ & $3.606_{\pm0.068}$ & $5.321_{\pm0.127}$ & $13.450_{\pm0.271}$\\
\bottomrule
\end{tabular}
}
\end{table}

\begin{figure}[!th]
    \centering
    \begin{subfigure}{0.45\textwidth}
        \includegraphics[width=\linewidth]{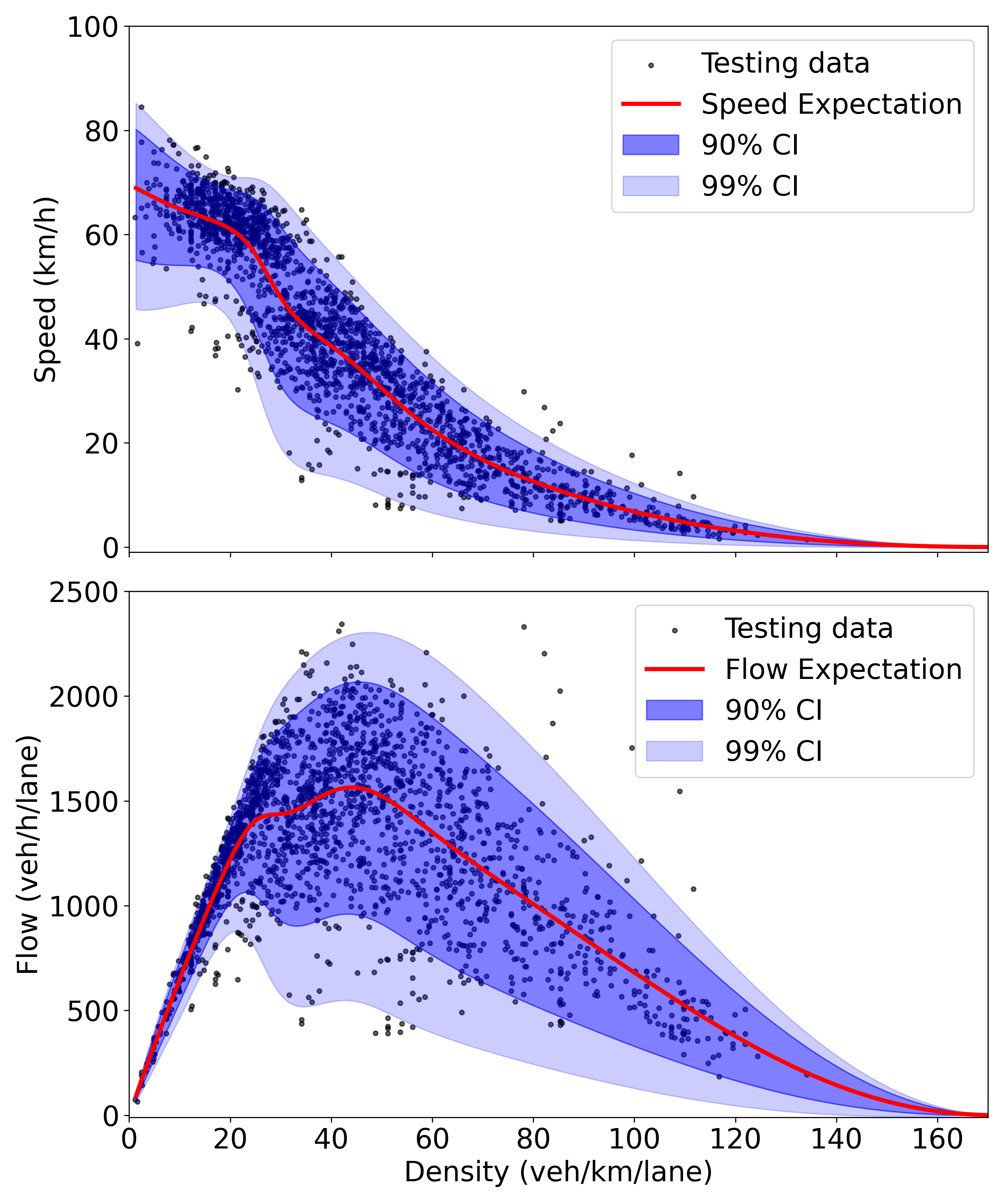}
        \caption{SN-BwNC}
    \end{subfigure}
    \begin{subfigure}{0.45\textwidth}
        \includegraphics[width=\linewidth]{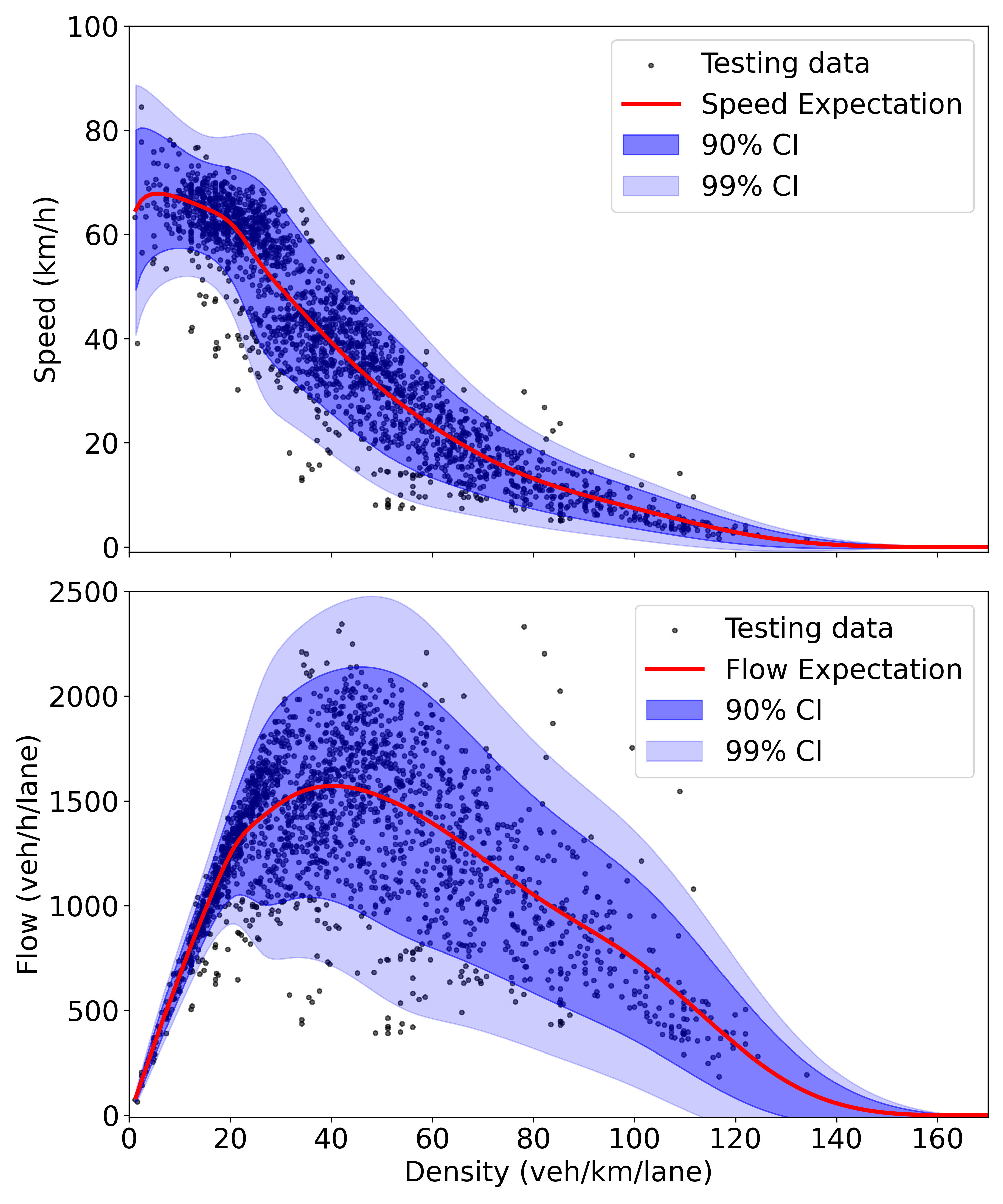}
        \caption{N-BwNC}
    \end{subfigure}
    \vspace{0.5em}

    \begin{subfigure}{0.45\textwidth}
        \includegraphics[width=\linewidth]{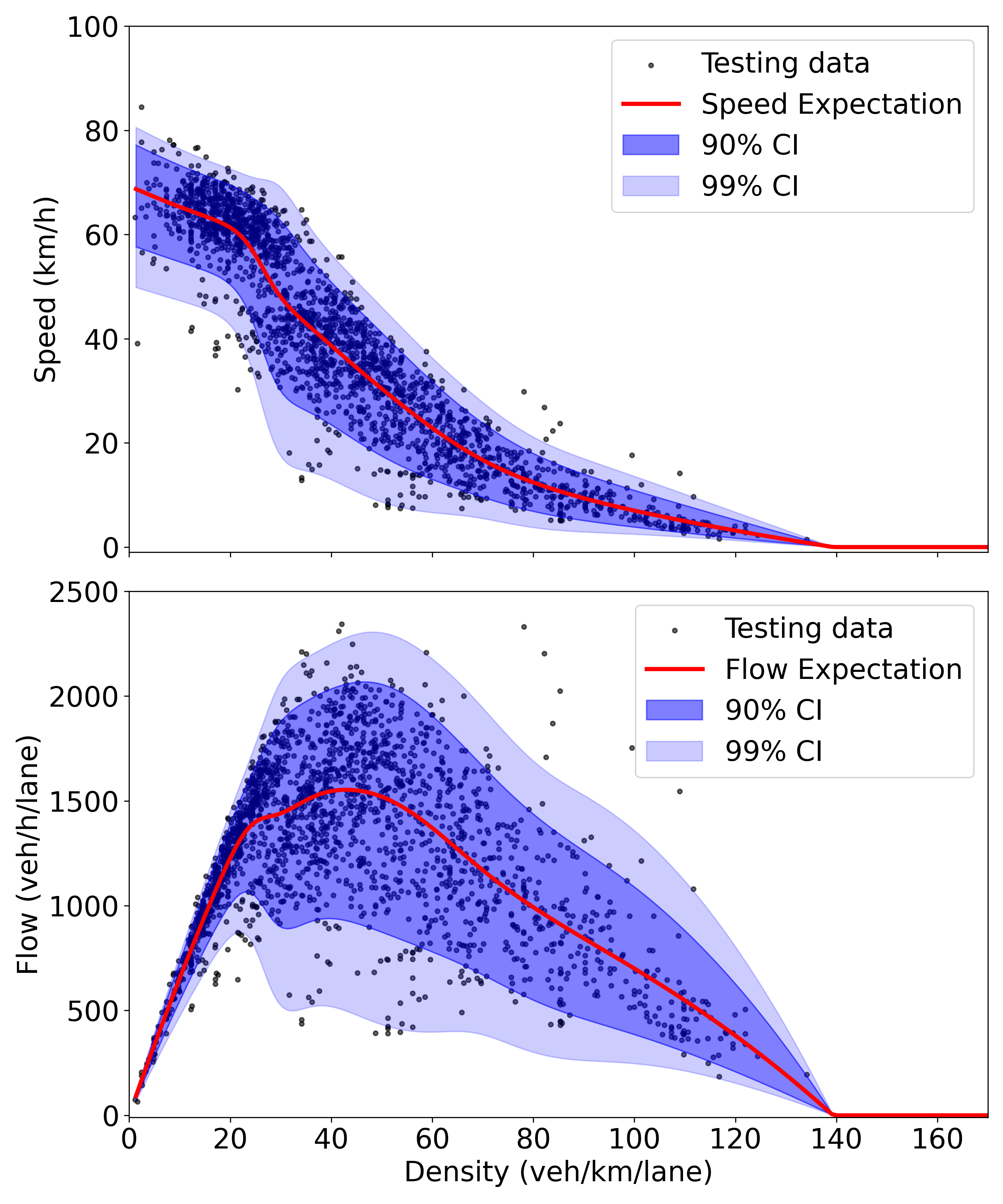}
        \caption{SN-QwNC}
    \end{subfigure}
    \begin{subfigure}{0.45\textwidth}
        \includegraphics[width=\linewidth]{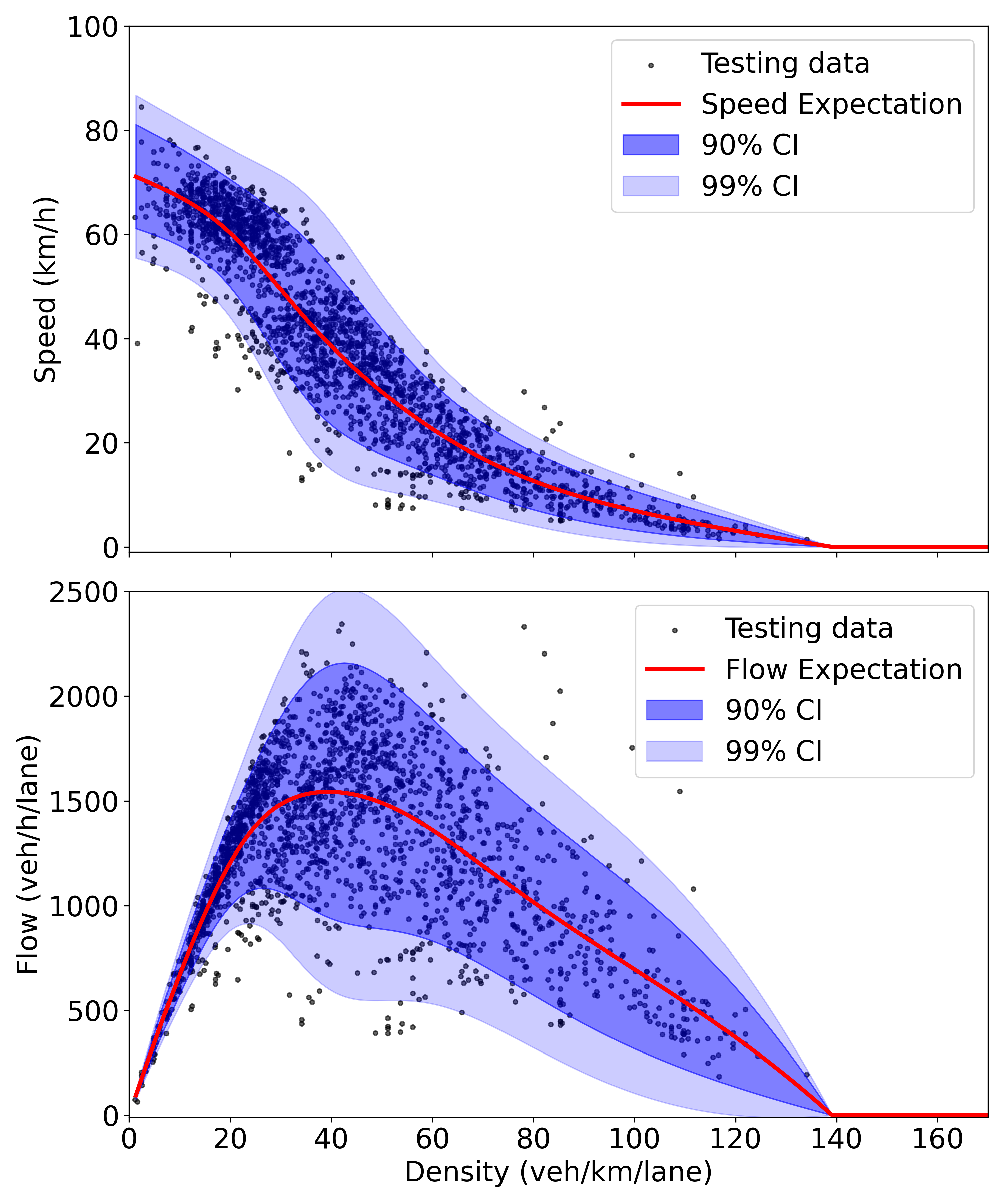}
        \caption{N-QwNC}
    \end{subfigure}
    \caption{Modeling results from Stochastic FD Models under the Semiparametric Framework}
    \label{fig:five_subfigs}
\end{figure}

\begin{figure}[!th]
    \centering
    \begin{subfigure}{0.45\textwidth}
        \includegraphics[width=\linewidth]{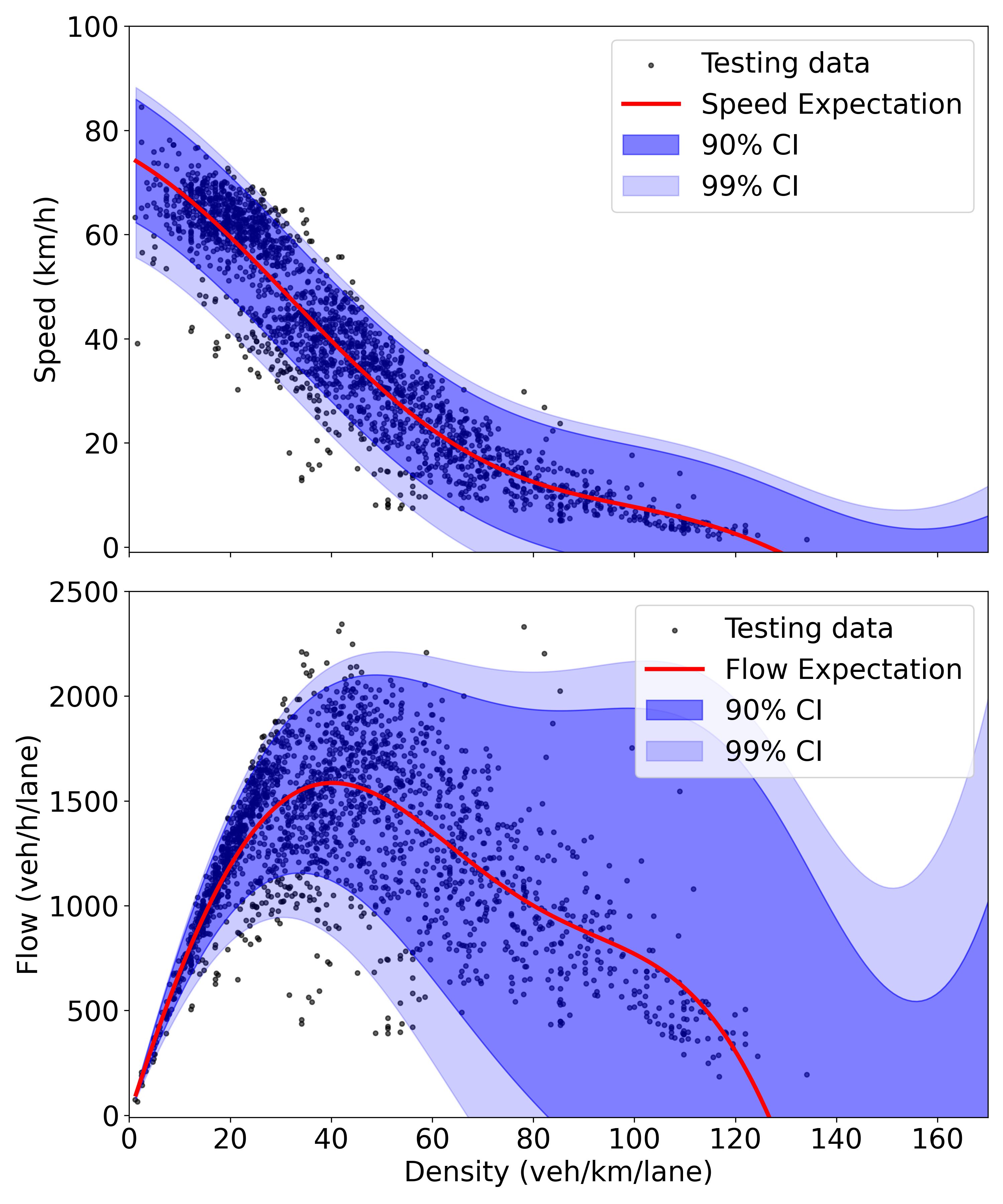}
        \caption{Greenshields Model with Gaussian Process }
        \label{subfig:gp-gs}
    \end{subfigure}
    \begin{subfigure}{0.45\textwidth}
        \includegraphics[width=\linewidth]{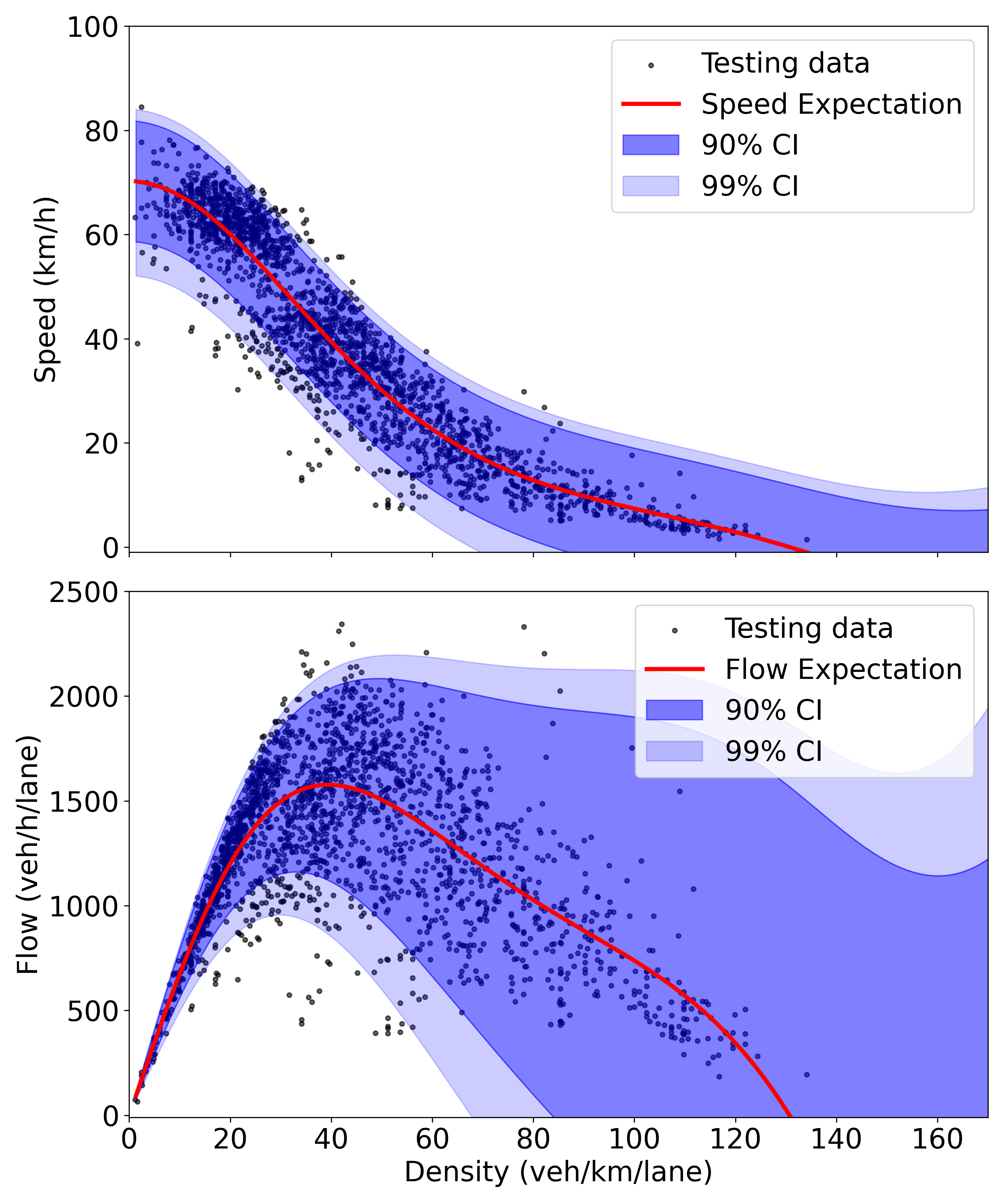}
        \caption{ S3 Model with Gaussian Process}
        \label{subfig:gp-s3}
    \end{subfigure}
    \vspace{0.5em}

    \begin{subfigure}{0.45\textwidth}
        \includegraphics[width=\linewidth]{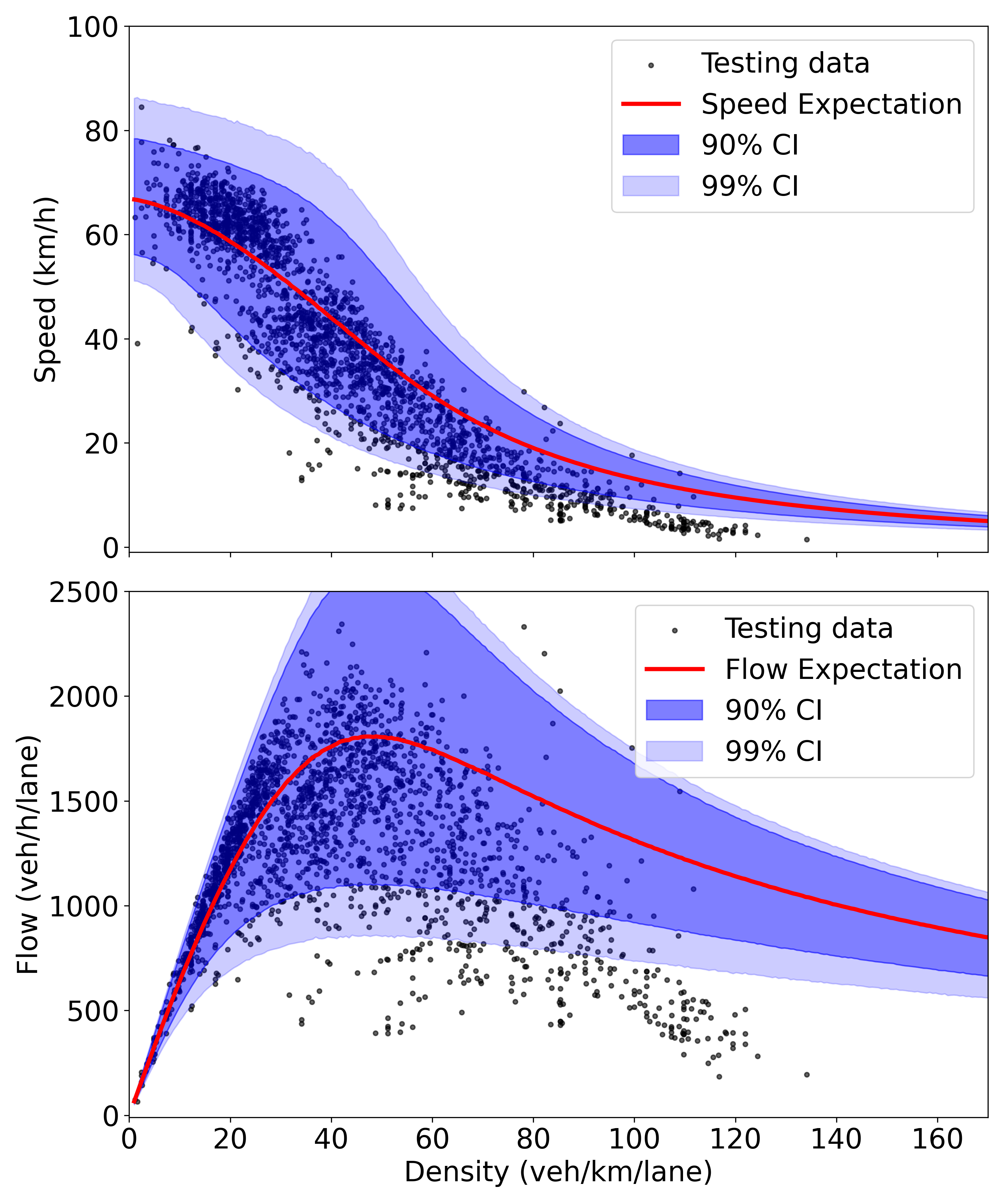}
        \caption{S3 Model with Log-Normal Distribution}
        \label{subfig:s3-lg}
    \end{subfigure}
    
    \caption{Modeling results from Baseline Stochastic FD Models}
    \label{fig:three_subfigs}
\end{figure}

Performance metrics for probabilistic estimation across all evaluated models are summarized in Table \ref{tab:estimation_performance}. 
Five evaluation metrics are reported for both speed-density and flow-density relations. The reported values represent statistical mean and standard deviation using 5-fold cross-validation. 
The NLL metric is omitted for the \textit{S3+LN} model because its probability density function does not admit a close-form expression.

Overall, the experimental results reveal that stochastic FD models developed under the proposed semiparametric framework consistently outperform the baseline approaches. Among all baseline models, the \textit{S3+LN} model exhibits the poorest performance across most of the metrics. In contrast, the Gaussian Process (GP) regression models achieve substantially better performance relative to the \textit{S3+LN} approach.

The table also reveals performance differences among the model variants. For the baseline models, the GP regression based on the deterministic S3 model outperforms the formulation based on the Greenshields model. This advantage is consistent across cross-validation folds, as reflected by the relatively small standard deviations. 

Among the models proposed in this study, semiparametric models employing the Skew-Normal (SN) distribution consistently outperform their Normal (N) distribution counterparts, indicating that modeling asymmetric uncertainty improves predictive performance. 
Furthermore, the BwNC formulation demonstrates slight better mean performance than the QwNC formulation. However, the considerable overlap in their standard deviations suggests that the performance difference between the two formulations is relatively modest. 

\subsection{Modeling Analysis}
Figure~\ref{fig:five_subfigs} illustrates stochastic FD modeling results from semiparametric models proposed in this study, while Figure~\ref{fig:three_subfigs} shows the corresponding results from baseline models.
In these visualizations, the red curve represents the conditional expected flow or speed for each density value, while the blue shading areas demonstrate the 90\% and 99\% confidence intervals. 


The semiparametric models produce physically consistent FD relationships. Specifically, both flow and speed approach zero when traffic density approaches the estimated jam density. The estimated jam density is approximately $150$ veh/km/lane, which is consistent with the typical observations in the traffic environment. Furthermore, the predicted uncertainty covers the majority of the observed data points and gradually shrink as density approaches the jam density in the congestion regime. 

Furthermore, models based on Skew-Normal distribution are able to capture the transition between free flow and congestion regimes, which occurs around a density of $35$ veh/km/lane. In addition, the SN-BwNC and N-QwNC models generate smooth functional relationships without noticeable overfitting, suggesting that the semiparametric formulation provides a good balance between model flexibility and structural regularization.

The results for the \textit{S3+LN} model reveal a pronounced bias toward higher values in heavy congestion regimes, as illustrated in Figure~\ref{subfig:s3-lg}. Specifically, in both the speed-density and flow-density relationships, the empirical observations in the congested region consistently fall outside the 99\% confidence interval of the model estimation. 

A comparison of model parameters between \textit{S3+LN} and the deterministic S3 model (Table~\ref{tab:prove_s3}) indicates that this performance degradation stems from the inherent inability of the deterministic model to characterize heavy congestion regimes. Treating parameters as random variables primarily serves to introduce uncertainty without substantially improving the underlying fitting quality of the deterministic framework. Consequently, \textit{S3+LN} inherits the structural drawbacks of the \textit{S3} model, leading to persistent systematic bias in higher density regions.

In contrast, Gaussian Process (GP) regression demonstrates a superior capacity for stochastic modeling, though it encounters challenges in generating consistent uncertainty across the entire density range. As shown in Figure~\ref{subfig:gp-s3}, the inherent bias of the \textit{S3} model is successfully mitigated in the \textit{S3+GP} configuration. Similarly, Figure~\ref{subfig:gp-gs} displays the results for \textit{S3+GP}, where the Gaussian Process effectively introduces necessary nonlinearity to the standard Greenshields speed--density model.
However, the performance in the heavy congestion region remains suboptimal. The scarcity of training data in high-density states leads to a significant localized increase in model uncertainty. This variance is further magnified when transformed into the flow-density relation, where the multiplicative effect of density results in a wide, undesirable predictive interval that limits the model's practical utility for congestion forecasting.

\begin{figure}[!t]
  \centering
    \includegraphics[width=\textwidth]{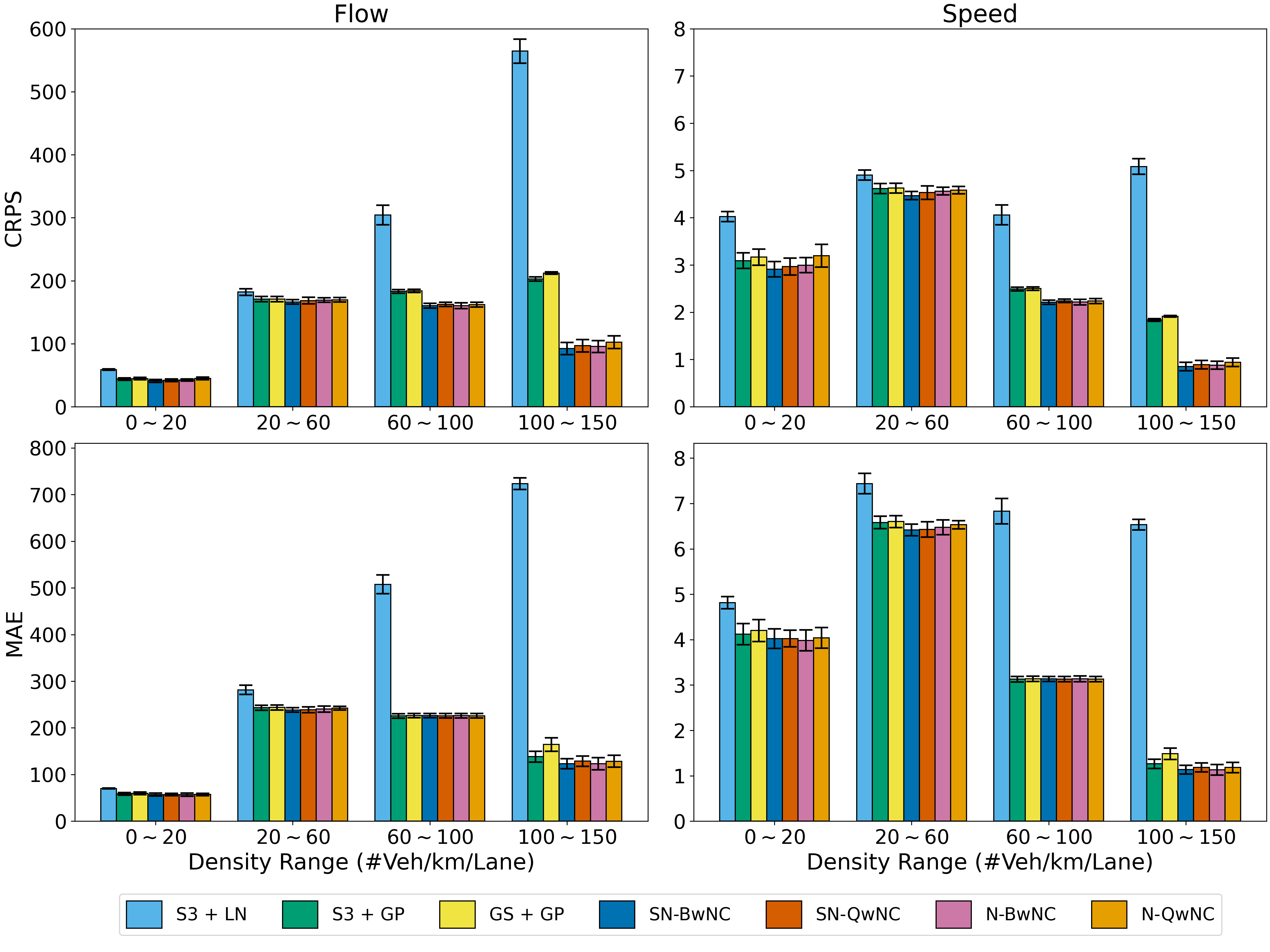} 
  \caption{CRPS and MAE Comparison on Different Density Regions}
  \label{fig:comparison}
\end{figure}

\subsection{Performance on Different Density Regions}
The CRPS and MAE comparison of the stochastic FD models under different density regions is presented in Figure \ref{fig:comparison}. The density is categorized into four regimes: free flow (0-20 veh/km), transition (20-60 veh/km), light congestion (60-100 veh/km), and heavy congestion (100-150 veh/km). 

Semiparametric models consistently outperform baseline counterparts across all density regimes. This improvement is most pronounced in the heavy congestion region, where baseline models typically struggle with high variance.

Regarding the flow-density relationship, a divergence in performance trends is observed as density increases. While the CRPS for baseline models degrades significantly in high-density states, the semiparametric models exhibit enhanced predictive accuracy and stability under heavy congestion.

For the speed-density relationship, all models follow a non-monotonic performance curve in the speed-density relationship, with the highest error rates (worst performance) appeared in the transition region, likely due to the inherent instability of traffic flow during phase shifts.


The deterministic performance of semiparametric SFD models is comparable to that of the Gaussian Process (GP) models. This suggests that the potential driver for improved stochastic modeling is the integration of physical constraints, which effectively regularizes the model's uncertainty in high-congestion regions.


\subsection{Ablation on Regularization Loss}
\begin{figure}[!tb]
    \centering
    \includegraphics[width=\linewidth]{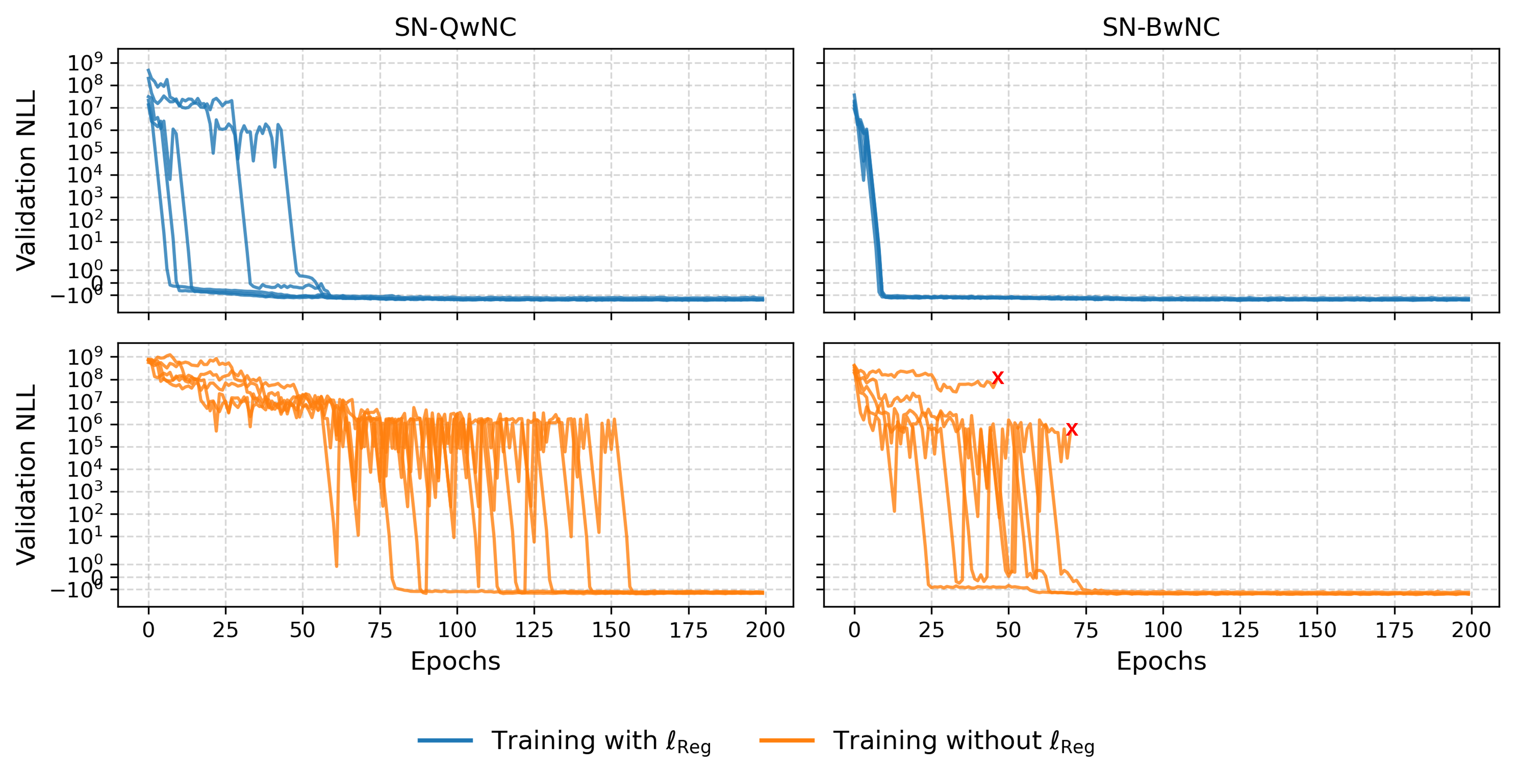}
    \caption{The convergence patterns during model training with v.s. without regularization loss in the objective function.}
    \label{fig:ablation-reg}
\end{figure}

The influence of the regularization term $\ell_{\mathrm{Reg}}$ on training stability is evaluated in Figure~\ref{fig:ablation-reg}, which depicts the negative log-likelihood (NLL) loss across five cross-validation folds. The blue and orange curves represent the training trajectories with and without the inclusion of $\ell_{\mathrm{Reg}}$, respectively.

The experimental results indicate that $\ell_{\mathrm{Reg}}$ is indispensable for numerical stability in both the SN-QwNC and SN-BwNC architectures. In the absence of this regularization term, both models exhibit highly volatile loss curves and a significant reduction in convergence speed. Notably, the SN-BwNC model fails to achieve convergence in two out of the five folds when $\ell_{\mathrm{Reg}}$ is removed. These ablation results confirm that the proposed regularization loss is a critical component for ensuring both the stability and the overall effectiveness of the stochastic training process.

\section{Discussion}\label{sec:discussion}
The effectiveness of the proposed framework relies on its ability to
propagate first and second-moment constraints to the distribution parameters, specifically the constrained parameters $\bm\eta_\mathrm{con}$. 
Therefore, after determining constrained parameters, the primary theoretical challenge lies in the existence of a solution for the moment-matching system in Eq.~\eqref{eq:moment_matching}.
For a distribution $f$ and values $0 < m, s < \infty$, the problem simplifies to:
\begin{equation}\label{eq:simplified}
\begin{cases}
    \ \mathbb{E}_{X\sim f}[X] &= m\\
   \ \mathrm{Var}_{X\sim f}[X] &= s^2\\
\end{cases}
\end{equation}

As established in Proposition~\ref{prop:moment_matching}, any
distribution belonging to a location-scale family guarantees a
unique solution to this system. However, the applicability of our framework extends beyond this family: while some distributions naturally yield unique solutions, others become feasible once appropriate parameter constraints are imposed.

\subsection{Feasible Distribution from Non-Location-scale Family}
\subsubsection{Gamma Distribution}
A primary example of a feasible non-location-scale distribution is the Gamma distribution, with probability density function as:
\begin{equation}
    f(x; \alpha, \theta) = \frac{x^{\alpha-1}e^{-x/\theta}}{\theta^\alpha \Gamma(\alpha)}, \quad \text{for } x > 0,
\end{equation}
where $\Gamma(\cdot)$ is the gamma function, and $\{\alpha,\theta\}$ are positive values.  The first two moments are defined as:
\begin{equation}
    \mathbb{E}[X] = \alpha\theta, \qquad \mathrm{Var}(X) = \alpha\theta^2.
\end{equation}
By defining the constrained parameters as $\bm\eta_\mathrm{con}=[\alpha, \ \theta]^\mathsf T$,  we obtain a unique solution for the system of equations in Eq.~\eqref{eq:simplified}:
\begin{equation}
    \theta = \frac{s^2}{m}, \qquad \alpha = \frac{m^2}{s^2}.
\end{equation}

\subsubsection{Log-Normal Distribution}
The Log-Normal distribution is another widely used non-location-scale family that fits our framework. 
With its probability density function defined by
\begin{equation}
    f(x; \mu, \sigma) = \frac{1}{x\sigma\sqrt{2\pi}} \exp\left(-\frac{(\ln x - \mu)^2}{2\sigma^2}\right), \quad \text{for } x > 0,
\end{equation}
where $\mu \in \mathbb{R}$  and $\sigma > 0$, its first two moments are:
\begin{equation}
    \mathbb E[X]=e^{\mu+\sigma^2/2},\quad 
    \mathrm{Var}(X)=(e^{\sigma^2}-1)e^{2\mu+\sigma^2}.
\end{equation}
Here, the constrained parameters are defined as $\bm\eta_\mathrm{con}=[\mu, \ \sigma]^\mathsf T$. To adapt this to our framework, we substitute these theoretical moments into Eq.~\eqref{eq:simplified}. This yields a unique, closed-form solution for the parameters:
\begin{equation}
\begin{aligned}
    \sigma=\left(\ln\!\left(1+\frac{s^2}{m^2}\right)\right)^{\frac{1}{2}},
\qquad
\mu=\ln m-\frac{1}{2}\ln\!\left(1+\frac{s^2}{m^2}\right).
\end{aligned}
\end{equation}

\subsection{Distributions Requiring Additional Constraints}
\subsubsection{Truncated Distributions}
While many distributions inherently support the moment-matching system, certain families, such as truncated distributions, require additional mathematical constraints to remain feasible. 
As demonstrated by \citet{bhatia2000better}, the variance of any bounded probability distribution is strictly limited by its expectation, as well as its absolute lower bound $L$ and upper bound $U$. 
This relationship is expressed as:
\begin{equation}\label{eq:bhatia_bound}
    \mathrm{Var}[X] \le (\mathbb{E}[X]-L) (U-\mathbb{E}[X]).
\end{equation}
Consequently, the condition $0<s<\infty$ does not guarantee a solution for the system in Eq.~\eqref{eq:simplified}, particularly when $s$ is excessively large. 
Translating the mathematical limitation into our physical constraints, an upper bound is needed for the standard deviation:
\begin{equation}
\begin{cases}
\mathbb{E}_{q \sim p(q \mid \rho)}[q] = \mathbb{S}_{q \sim p(q \mid \rho)}[q] = 0, & \rho \in \{0, \rho_{\mathrm{jam}}\}, \\
0 < \mathbb{E}_{q \sim p(q \mid \rho)}[q] < \infty, & \rho \in (0, \rho_{\mathrm{jam}}), \\
0 < \mathbb{S}_{q \sim p(q \mid \rho)}[q] \le S_{\max}(\rho), & \rho \in (0, \rho_{\mathrm{jam}}).
\end{cases}
\end{equation}
 $S_{\max}(\rho)$ represents its theoretical upper bound derived from Eq.~\eqref{eq:bhatia_bound}, given by:
 \begin{equation}
    S_{\max}(\rho) = \sqrt{ \left(\mathbb{E}_{q \sim p(q \mid \rho)}[q] - L(\rho)\right) \left(U(\rho) - \mathbb{E}_{q \sim p(q \mid \rho)}[q]\right) },
\end{equation}
where $L(\rho)$ and $U(\rho)$ denote the physical lower bound (e.g., minimum flow) and upper bound (i.e., maximum capacity) of flow $q$ at density $\rho$, respectively. 
To guarantee the existence of a valid solution for the moment-matching equations, the functional design of the standard deviation $s(\rho)$ must explicitly depend on the expectation function $m(\rho)$, as well as the local boundary functions $L(\rho)$ and $U(\rho)$.

\subsubsection{Gaussian Mixture Model}
Another complex scenario arises when employing mixture distributions, such as a two-component Gaussian Mixture Model (GMM). The probability density function is given by:
\begin{equation}
p(x) = k\mathcal N(x; \mu_1,\sigma_1^2) + (1-k)\mathcal N(x; \mu_2,\sigma_2^2),
\end{equation}
where $k \in (0,1)$ is the mixing weight. The theoretical mean and variance can be analytically expressed as:
\begin{equation}
\begin{aligned}
    \mathbb E[X] &= k\mu_1+(1-k)\mu_2,\\
    \mathrm{Var}(X) &= k\sigma_1^2 + (1-k)\sigma_2^2 + k(1-k)(\mu_1 - \mu_2)^2.
\end{aligned}
\end{equation}
The parameters are partitioned into constrained parameters $\bm\eta_\mathrm{con} = [\mu_1,\ \mu_2]^\mathsf T$, and unconstrained  parameters $\bm\eta_\mathrm{free} = [k,\ \sigma_1,\ \sigma_2]^\mathsf T$. Equating the moments to $m$ and $s^2$ provides the following symmetric closed-form solutions:
\begin{equation}
\mu_1 = m \pm \sqrt{\frac{1-k}{k}\Delta}, \qquad\mu_2 = m \mp \sqrt{\frac{k}{1-k}\Delta}\ ,
\end{equation}
where $\Delta = s^2 - \left[ k\sigma_1^2 + (1-k)\sigma_2^2 \right] > 0$. A real solution necessitates $\Delta > 0$, thereby introducing a lower bound for the standard deviation. Consequently, the physical constraints must be adjusted:

\begin{equation}
\begin{cases}
\mathbb{E}_{q \sim p(q \mid \rho)}[q] = \mathbb{S}_{q \sim p(q \mid \rho)}[q] = 0, & \rho \in \{0, \rho_{\mathrm{jam}}\}, \\
0 < \mathbb{E}_{q \sim p(q \mid \rho)}[q] < \infty, & \rho \in (0, \rho_{\mathrm{jam}}), \\
S_{\min}(\rho) < \mathbb{S}_{q \sim p(q \mid \rho)}[q] < \infty, & \rho \in (0, \rho_{\mathrm{jam}}).
\end{cases}
\end{equation}
where $S_{\min}(\rho) = \sqrt{k(\rho)\sigma_1^2(\rho) + (1-k(\rho))\sigma_2^2(\rho)}$. Ultimately, the design of the standard deviation function $s(\rho)$ must carefully accommodate the selected free parameters $\bm\eta_\mathrm{free}$ to ensure the condition $\Delta > 0$ is continuously satisfied.

\section{Conclusion}
In this study, we proposed a novel semiparametric framework for modeling stochastic fundamental diagrams. By partitioning distribution parameters into a constrained component and a data-driven neural component, our framework reconciles physical consistency with empirical flexibility. This semiparametric design ensures that essential traffic properties, such as boundary conditions at zero and jam density, are satisfied by construction while capturing the complex stochastic patterns inherent in traffic data.

From a theoretical perspective, we established that the underlying moment-matching system admits a unique solution for location–scale families, guaranteeing the well-posedness of the parameterization. We further demonstrated the framework’s extensibility by deriving feasibility conditions for non-location–scale families, including truncated and mixture-based distributions in the final discussion.

Empirical evaluation using the MAGIC dataset demonstrates that our semiparametric approach consistently outperforms baseline models. Specifically, the proposed models provide superior probabilistic accuracy and robust uncertainty quantification, particularly in congested regimes where the conventional stochastic extension of deterministic models often exhibit systematic bias. Our results further highlight that incorporating asymmetric distributions significantly enhances the model's ability to capture observed empirical variance.

In summary, this framework provides a flexible, theoretically grounded foundation for stochastic FD modeling. Future research will explore the integration of diverse distribution families, the extension of the model to multi-regime and multi-lane dynamics, and the inclusion of spatiotemporal dependencies to further improve the fidelity of stochastic traffic flow representations.

\section*{Acknowledgment}
 The authors thank the Swedish Transport Administration for supporting this work. 

\appendix
\section{Parameters for the Log-Normal-Based S3 Model}
\label{app1}

Table \ref{tab:prove_s3} shows marginal difference between the \textit{S3} model parameters obtained using the original method \citep{cheng2021s} (i.e., the weighted least squares method) and those derived from the method by \citet{cheng2024analytical}.

\begin{table}[!htbp]
\centering
\small
\setlength{\tabcolsep}{4pt}
\caption{Parameter Value Comparison between Deterministic Model and Log-Normal-Based Stochastic Model}
\label{tab:prove_s3}
\begin{tabular}{lccc}
\toprule
& S3 + WLSM & Expected Parameter Value & Parameter as Random Variable \\
\midrule
$v_{\mathrm{free}}$
& $67.773_{\pm0.259}$
& $66.290_{\pm0.205}$
& $\ln(v_{\mathrm{free}})\sim \mathcal{N}(\mu=4.194_{\pm0.003},\sigma=0.104_{\pm0.004})$ \\

$v_{\mathrm{critical}}$
& $41.008_{\pm0.238}$
& $37.887_{\pm0.057}$
& $\ln(v_{\mathrm{critical}})\sim \mathcal{N}(\mu=3.635_{\pm0.002},\sigma=0.265_{\pm0.003})$ \\

$m$
& $2.760_{\pm0.039}$
& $2.478_{\pm0.017}$
& $\tfrac{1}{m}\sim \mathcal{N}(\mu=0.404_{\pm0.003},\sigma=0.205_{\pm0.001})$ \\

$\rho_{\mathrm{critical}}$
& $48.554_{\pm0.170}$
& $48.554_{\pm0.170}$
& Constant \\
\bottomrule
\end{tabular}
\end{table}

\newpage
\bibliographystyle{elsarticle-harv}
\bibliography{cas-refs}

@inproceedings{greenshields1935study,
  title={A study of traffic capacity},
  author={Greenshields, Bruce D. and Bibbins, J. Rowland and Channing, W. S. and Miller, Harvey H.},
  booktitle={{Highway Research Board Proceedings}},
  volume={14},
  number={1},
  pages={448--477},
  year={1935},
  address={Washington, D.C.}
}

@article{greenberg1959analysis,
  title={An analysis of traffic flow},
  author={Greenberg, Harold},
  journal={Operations Research},
  volume={7},
  number={1},
  pages={79--85},
  year={1959},
  publisher={INFORMS}
}

@article{newell1961nonlinear,
  title={Nonlinear effects in the dynamics of car following},
  author={Newell, Gordon Frank},
  journal={Operations Research},
  volume={9},
  number={2},
  pages={209--229},
  year={1961},
  publisher={INFORMS}
}

@article{ma2022magic,
  title={{MAGIC} dataset: Multiple conditions unmanned aerial vehicle group-based high-fidelity comprehensive vehicle trajectory dataset},
  author={Ma, Wanjing and Zhong, Hao and Wang, Ling and Jiang, Linzhi and Abdel-Aty, Mohamed},
  journal={Transportation Research Record},
  volume={2676},
  number={5},
  pages={793--805},
  year={2022},
  publisher={SAGE Publications},
  address={Los Angeles, CA}
}

@book{ni2015traffic,
  title={Traffic flow theory: Characteristics, experimental methods, and numerical techniques},
  author={Ni, Daiheng},
  year={2015},
  publisher={Butterworth-Heinemann}
}

@article{cheng2024analytical,
  title={Analytical formulation for explaining the variations in traffic states: A fundamental diagram modeling perspective with stochastic parameters},
  author={Cheng, Qixiu and Lin, Yuqian and Zhou, Xuesong Simon and Liu, Zhiyuan},
  journal={European Journal of Operational Research},
  volume={312},
  number={1},
  pages={182--197},
  year={2024},
  publisher={Elsevier}
}

@article{lei2024unraveling,
  title={Unraveling stochastic fundamental diagrams with empirical knowledge: Modeling, limitations, and future directions},
  author={Lei, Yuan-Zheng and Gong, Yaobang and Yang, Xianfeng Terry},
  journal={Transportation Research Part C: Emerging Technologies},
  volume={169},
  pages={104851},
  year={2024},
  publisher={Elsevier}
}

@article{qu2015fundamental,
  title={On the fundamental diagram for freeway traffic: A novel calibration approach for single-regime models},
  author={Qu, Xiaobo and Wang, Shuaian and Zhang, Jin},
  journal={Transportation Research Part B: Methodological},
  volume={73},
  pages={91--102},
  year={2015},
  publisher={Elsevier}
}

@article{cheng2021s,
  title={An {S}-shaped three-parameter {(S3)} traffic stream model with consistent car following relationship},
  author={Cheng, Qixiu and Liu, Zhiyuan and Lin, Yuqian and Zhou, Xuesong Simon},
  journal={Transportation Research Part B: Methodological},
  volume={153},
  pages={246--271},
  year={2021},
  publisher={Elsevier}
}

@article{jayakrishnan1995dynamic,
  title={A dynamic traffic assignment model with traffic-flow relationships},
  author={Jayakrishnan, R. and Tsai, Wei K. and Chen, Anthony},
  journal={Transportation Research Part C: Emerging Technologies},
  volume={3},
  number={1},
  pages={51--72},
  year={1995},
  publisher={Elsevier}
}

@article{wang2011logistic,
  title={Logistic modeling of the equilibrium speed--density relationship},
  author={Wang, Haizhong and Li, Jia and Chen, Qian-Yong and Ni, Daiheng},
  journal={Transportation Research Part A: Policy and Practice},
  volume={45},
  number={6},
  pages={554--566},
  year={2011},
  publisher={Elsevier}
}

@article{papageorgiou1989macroscopic,
  title={Macroscopic modelling of traffic flow on the {Boulevard P{\'e}riph{\'e}rique} in {Paris}},
  author={Papageorgiou, Markos and Blosseville, Jean-Marc and Hadj-Salem, Habib},
  journal={Transportation Research Part B: Methodological},
  volume={23},
  number={1},
  pages={29--47},
  year={1989},
  publisher={Elsevier}
}

@article{del1995functional,
  title={On the functional form of the speed-density relationship---{II}: Empirical investigation},
  author={Del Castillo, J. M. and Benitez, F. G.},
  journal={Transportation Research Part B: Methodological},
  volume={29},
  number={5},
  pages={391--406},
  year={1995},
  publisher={Elsevier}
}

@article{bai2021calibration,
  title={Calibration of stochastic link-based fundamental diagram with explicit consideration of speed heterogeneity},
  author={Bai, Lu and Wong, S. C. and Xu, Pengpeng and Chow, Andy H. F. and Lam, William H. K.},
  journal={Transportation Research Part B: Methodological},
  volume={150},
  pages={524--539},
  year={2021},
  publisher={Elsevier}
}

@article{jabari2014probabilistic,
  title={A probabilistic stationary speed--density relation based on {Newell's} simplified car-following model},
  author={Jabari, Saif Eddin and Zheng, Jianfeng and Liu, Henry X.},
  journal={Transportation Research Part B: Methodological},
  volume={68},
  pages={205--223},
  year={2014},
  publisher={Elsevier}
}

@article{ni2018modeling,
  title={Modeling phase diagrams as stochastic processes with application in vehicular traffic flow},
  author={Ni, Daiheng and Hsieh, Hui K. and Jiang, Tao},
  journal={Applied Mathematical Modelling},
  volume={53},
  pages={106--117},
  year={2018},
  publisher={Elsevier}
}

@article{qu2017stochastic,
  title={On the stochastic fundamental diagram for freeway traffic: Model development, analytical properties, validation, and extensive applications},
  author={Qu, Xiaobo and Zhang, Jin and Wang, Shuaian},
  journal={Transportation Research Part B: Methodological},
  volume={104},
  pages={256--271},
  year={2017},
  publisher={Elsevier}
}

@article{wang2021model,
  title={Model on empirically calibrating stochastic traffic flow fundamental diagram},
  author={Wang, Shuaian and Chen, Xinyuan and Qu, Xiaobo},
  journal={Communications in Transportation Research},
  volume={1},
  pages={100015},
  year={2021},
  publisher={Elsevier}
}

@article{bramich2023fitfun,
  title={{FitFun}: A modelling framework for successfully capturing the functional form and noise of observed traffic flow--density--speed relationships},
  author={Bramich, Daniel M. and Menendez, Monica and Amb{\"u}hl, Lukas},
  journal={Transportation Research Part C: Emerging Technologies},
  volume={151},
  pages={104068},
  year={2023},
  publisher={Elsevier}
}

@article{ahmed2021fundamental,
  title={On the fundamental diagram and driving behavior modeling of heterogeneous traffic flow using {UAV}-based data},
  author={Ahmed, Afzal and Ngoduy, Dong and Adnan, Muhammad and Baig, Mirza Asad Ullah},
  journal={Transportation Research Part A: Policy and Practice},
  volume={148},
  pages={100--115},
  year={2021},
  publisher={Elsevier}
}

@article{zhou2020modeling,
  title={Modeling the fundamental diagram of mixed human-driven and connected automated vehicles},
  author={Zhou, Jiazu and Zhu, Feng},
  journal={Transportation Research Part C: Emerging Technologies},
  volume={115},
  pages={102614},
  year={2020},
  publisher={Elsevier}
}

@article{hendrycks2016gaussian,
  title={{Gaussian} error linear units {(GELUs)}},
  author={Hendrycks, Dan and Gimpel, Kevin},
  journal={arXiv preprint arXiv:1606.08415},
  year={2016}
}

@article{kingma2014adam,
  title={{Adam}: A method for stochastic optimization},
  author={Kingma, Diederik P. and Ba, Jimmy},
  journal={arXiv preprint arXiv:1412.6980},
  year={2014}
}

@article{zhang2025stochastic,
  title={On the stochastic fundamental diagram: A general micro-macroscopic traffic flow modeling framework},
  author={Zhang, Xiaohui and Sun, Jie and Sun, Jian},
  journal={Communications in Transportation Research},
  volume={5},
  pages={100163},
  year={2025},
  publisher={Elsevier}
}

@article{paszke2019pytorch,
  title={{PyTorch}: An imperative style, high-performance deep learning library},
  author={Paszke, Adam and Gross, Sam and Massa, Francisco and Lerer, Adam and Bradbury, James and Chanan, Gregory and Killeen, Trevor and Lin, Zeming and Gimelshein, Natalia and Antiga, Luca and others},
  journal={Advances in Neural Information Processing Systems},
  volume={32},
  year={2019}
}

@inproceedings{frostig2019compiling,
  title={Compiling machine learning programs via high-level tracing},
  author={Frostig, Roy and Johnson, Matthew James and Leary, Chris},
  booktitle={{SysML} Conference 2018},
  year={2019}
}

@article{storm2022efficient,
  title={Efficient evaluation of stochastic traffic flow models using {Gaussian} process approximation},
  author={Storm, Pieter Jacob and Mandjes, Michel and van Arem, Bart},
  journal={Transportation Research Part B: Methodological},
  volume={164},
  pages={126--144},
  year={2022},
  publisher={Elsevier}
}

@article{hornik1989multilayer,
  title={Multilayer feedforward networks are universal approximators},
  author={Hornik, Kurt and Stinchcombe, Maxwell and White, Halbert},
  journal={Neural Networks},
  volume={2},
  number={5},
  pages={359--366},
  year={1989},
  publisher={Elsevier}
}

@article{ni2016vehicle,
  title={Vehicle longitudinal control and traffic stream modeling},
  author={Ni, Daiheng and Leonard, John D. and Jia, Chaoqun and Wang, Jianqiang},
  journal={Transportation Science},
  volume={50},
  number={3},
  pages={1016--1031},
  year={2016},
  publisher={INFORMS}
}

@article{liu2023gaussian,
  title={A {Gaussian}-process-based data-driven traffic flow model and its application in road capacity analysis},
  author={Liu, Zhiyuan and Lyu, Cheng and Wang, Zelin and Wang, Shuaian and Liu, Pan and Meng, Qiang},
  journal={IEEE Transactions on Intelligent Transportation Systems},
  volume={24},
  number={2},
  pages={1544--1563},
  year={2023},
  publisher={IEEE}
}

@article{rigby2005generalized,
  title={Generalized additive models for location, scale and shape},
  author={Rigby, Robert A. and Stasinopoulos, D. Mikis},
  journal={Journal of the Royal Statistical Society Series C: Applied Statistics},
  volume={54},
  number={3},
  pages={507--554},
  year={2005},
  publisher={Oxford University Press}
}

@article{ngoduy2011multiclass,
  title={Multiclass first-order traffic model using stochastic fundamental diagrams},
  author={Ngoduy, D.},
  journal={Transportmetrica},
  volume={7},
  number={2},
  pages={111--125},
  year={2011},
  publisher={Taylor \& Francis}
}

@article{shi2021physics,
  title={A physics-informed deep learning paradigm for traffic state and fundamental diagram estimation},
  author={Shi, Rongye and Mo, Zhaobin and Huang, Kuang and Di, Xuan and Du, Qiang},
  journal={IEEE Transactions on Intelligent Transportation Systems},
  volume={23},
  number={8},
  pages={11688--11698},
  year={2021},
  publisher={IEEE}
}

@article{cheng2022bayesian,
  title={{Bayesian} calibration of traffic flow fundamental diagrams using {Gaussian} processes},
  author={Cheng, Zhanhong and Wang, Xudong and Chen, Xinyuan and Tr{\'e}panier, Martin and Sun, Lijun},
  journal={IEEE Open Journal of Intelligent Transportation Systems},
  volume={3},
  pages={763--771},
  year={2022},
  publisher={IEEE}
}

@techreport{bishop1994mixture,
  title={Mixture density networks},
  author={Bishop, Christopher M.},
  year={1994},
  institution={Aston University}
}

@article{theis2015generative,
  title={Generative image modeling using spatial {LSTMs}},
  author={Theis, Lucas and Bethge, Matthias},
  journal={Advances in Neural Information Processing Systems},
  volume={28},
  year={2015}
}

@article{gneiting2007strictly,
  title={Strictly proper scoring rules, prediction, and estimation},
  author={Gneiting, Tilmann and Raftery, Adrian E.},
  journal={Journal of the American Statistical Association},
  volume={102},
  number={477},
  pages={359--378},
  year={2007},
  publisher={Taylor \& Francis},
  doi={10.1198/016214506000001437}
}

@article{bhatia2000better,
  title={A better bound on the variance},
  author={Bhatia, Rajendra and Davis, Chandler},
  journal={The American Mathematical Monthly},
  volume={107},
  number={4},
  pages={353--357},
  year={2000},
  publisher={Taylor \& Francis}
}

\end{document}